
\documentclass{article}

\usepackage{microtype}
\usepackage{graphicx}
\usepackage{subfigure}
\usepackage{booktabs} 

\usepackage{hyperref}


\usepackage[accepted]{icml2020}


\icmltitlerunning{Combining Pessimism with Optimism for Robust and Efficient Model-Based Deep Reinforcement Learning }

\usepackage{amssymb}
\usepackage{xspace}        
\usepackage{mathtools}
\usepackage{color}
\usepackage[capitalise]{cleveref}
\usepackage{url}
\usepackage{multicol}


\usepackage{amsmath,amsfonts,bm}

















\def\1{\bm{1}}










\DeclareMathAlphabet{\mathsfit}{\encodingdefault}{\sfdefault}{m}{sl}
\SetMathAlphabet{\mathsfit}{bold}{\encodingdefault}{\sfdefault}{bx}{n}
















\DeclareMathOperator*{\argmax}{arg\,max}
\DeclareMathOperator*{\argmin}{arg\,min}

\newcommand{\iid}{\textit{i.i.d.}\xspace}

\newcommand{\E}[2][]{\mathbb{E}_{#1}\mathopen{}\left[#2\right]\mathclose{}}

\newcommand{\Or}[1]{\mathcal{O}\mathopen{}\left(#1\right)\mathclose{}}


\newcommand{\R}{\mathbb{R}}


\newcommand{\mb}[1]{\mathbf{#1}}        


\renewcommand{\mid}{\,|\,}






\newcommand{\alg}{\texttt{RH-UCRL}\xspace}
\newcommand{\hucrl}{\texttt{H-UCRL}\xspace}
\newcommand{\rhucrl}{\alg}

\newcommand{\x}{\mb{s}}
\renewcommand{\u}{\mb{a}}
\newcommand{\X}{\mathcal{S}}
\newcommand{\U}{\mathcal{A}}

\newcommand{\hi}{h}                  
\newcommand{\Hi}{H}
\newcommand{\ti}{t}                
\newcommand{\Ti}{T}

\newcommand{\noise}{\bm{\omega}}


\newcommand{\etao}{{\eta^{(o)}}}

\newcommand{\fo}{{f^{(o)}}}


\newcommand{\etap}{{\eta^{(p)}}}

\newcommand{\fp}{{f^{(p)}}}


\newcommand{\uadv}{\bar{\u}}
\newcommand{\Uadv}{\bar{\U}} 
\newcommand{\piadv}{\bar{\pi}} 
\newcommand{\Piadv}{\bar{\Pi}} 

\newcommand{\pifinal}{\hat{\pi}_T}


\newcommand{\nstate}{\ensuremath{p}}
\newcommand{\ninp}{\ensuremath{q}}
\newcommand{\ninpadv}{{\bar{\ninp}}}


\newcommand{\dataset}{\ensuremath{\mathcal{D}}}
\newcommand{\modelposterior}{\ensuremath{p(\tilde{f} \mid \dataset_{1:\ti})}}

\newcommand{\bsigma}{\bm{\sigma}}
\newcommand{\bSigma}{\bm{\Sigma}}
\newcommand{\bmu}{\bm{\mu}}


\usepackage{amsthm}
\usepackage{thmtools}
\usepackage{thm-restate}
\theoremstyle{plain}

\newtheorem{lemma}{Lemma}

\theoremstyle{definition}
\newtheorem{assumption}{Assumption}
\newtheorem{definition}{Definition}

\theoremstyle{remark}

\crefname{assumption}{assumption}{assumptions}
\Crefname{assumption}{Assumption}{Assumptions}

\begin{document}

\twocolumn[
\icmltitle{Combining Pessimism with Optimism for Robust and Efficient \\Model-Based Deep Reinforcement Learning}




\icmlsetsymbol{equal}{*}

\begin{icmlauthorlist}
\icmlauthor{Sebastian Curi}{to}
\icmlauthor{Ilija Bogunovic}{to}
\icmlauthor{Andreas Krause}{to}
\end{icmlauthorlist}

\icmlaffiliation{to}{Department of Computer Science, ETH, Z\"urich, Switzerland}

\icmlcorrespondingauthor{Sebastian Curi}{sebastian.curi@inf.ethz.ch}


\vskip 0.3in]



\printAffiliationsAndNotice{}  

\begin{abstract}
    In real-world tasks, reinforcement learning (RL) agents frequently encounter situations that are not present during training time.
    To ensure reliable performance, the RL agents need to exhibit {\em robustness} against worst-case situations. 
    The robust RL framework addresses this challenge via a worst-case optimization between an agent and an adversary.
    Previous robust RL algorithms are either sample inefficient, lack robustness guarantees, or do not scale to large problems.
    We propose the \emph{Robust Hallucinated Upper-Confidence} RL (\alg) algorithm to \em {provably} solve this problem while attaining \em {near-optimal} sample complexity guarantees. \alg is a model-based reinforcement learning (MBRL) algorithm that effectively distinguishes between epistemic and aleatoric uncertainty, and {\em efficiently} explores both the agent and adversary decision spaces during policy learning. 
    We scale \alg to complex tasks via neural networks ensemble models as well as neural network policies. 
    Experimentally, we demonstrate that \alg outperforms other robust deep RL algorithms in a variety of adversarial environments. \looseness=-1
\end{abstract}

\section{Introduction} \label{sec:introduction}

\looseness -1     A central challenge when deploying Reinforcement Learning (RL) agents in real environments is their robustness \citep{dulac2019challenges}.
As a motivating example, consider designing a braking system on an autonomous car. 
As this is a highly complex task, we want to \emph{learn} a policy that performs this maneuver.
Even if various real-world conditions can be simulated during the training time, it is infeasible to consider \emph{all} possible ones such as road conditions, brightness, tire pressure, laden weight, or actuator wear, as these can all vary over time in potentially unpredictable ways. 
The main goal is then to learn a policy that \emph{provably} brakes in a \emph{robust} fashion so that, even if faced with new conditions, it performs reliably. 
Borrowing from the robust control perspective (e.g, as considered in $\mathcal{H}_\infty$ control \citet{bacsar2008h}), we model robustness with an adversary that is allowed to perform the worst-case disturbance for the given policy. 


    While there are theoretical approaches for robust RL that offer sample complexity guarantees (see \cref{sec:related_work}), the existing algorithms are highly impractical.
    On the other hand, there are empirically motivated heuristic approaches that lack provable robustness. 
    We bridge this gap by proposing an algorithm that simultaneously enjoys rigorous theoretical guarantees, yet can be applied to complex tasks. \looseness=-1

    We develop the \emph{Robust Hallucinated Upper-Confidence Reinforcement Learning} (\rhucrl) algorithm for obtaining robust RL policies.
    It relies on a probabilistic model that can distinguish between epistemic uncertainty (that arises from the lack of data) and aleatoric uncertainty (stochastic noise) \citep{der2009aleatory}. 
    A key algorithmic principle behind \rhucrl is \emph{hallucination}: In particular, the {\em agent} hallucinates an additional control input to {\em maximize} an {\em optimistic} estimate of the robust performance whereas the {\em adversary} hallucinates an additional control input to {\em minimize} a {\em pessimistic} estimate of the robust performance.
    The amount of ``hallucination'' is limited by the epistemic uncertainty of the model and it decreases as the learning algorithm collects more data. 
    In a number of experiments, we show that our algorithm exhibits robust performance and  outperforms previous approaches on various popular RL benchmarks. In \Cref{fig:inverted_final}, we demonstrate how \alg outperforms baselines in a pendulum swing-up control task. In particular, our main observations are: (i) Robust baselines specifically designed for different settings do not explore sufficiently and fail to find a swing-up maneuver; (ii) Non-robust algorithm finds the swing-up strategy, but as adversarial power increases its performance significantly deteriorates, whereas \alg exhibits higher robustness to {\em worst-case} perturbations.\looseness=-1
    
    \begin{figure*}[t]
  \centering\includegraphics[width=\textwidth]{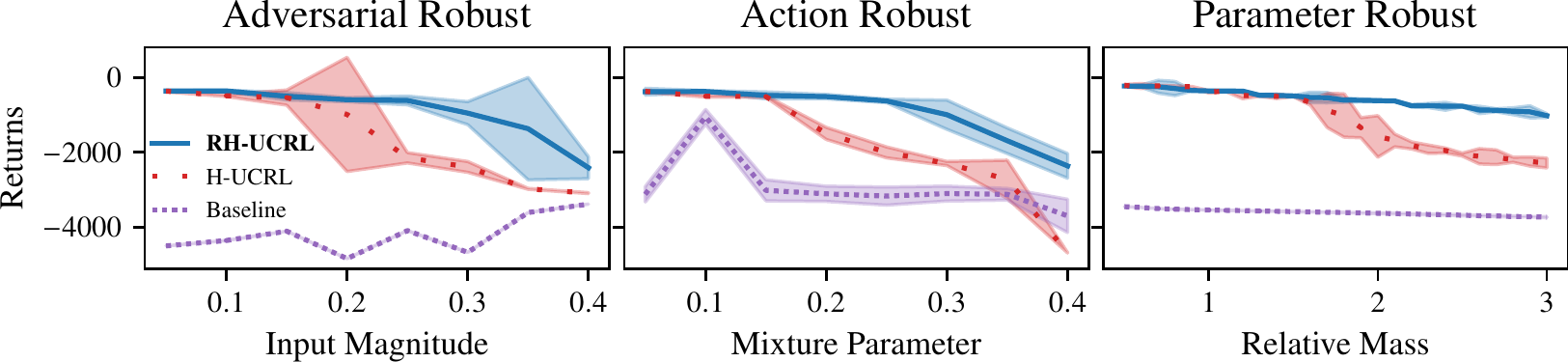}
    \vspace{-1em}
    
    \caption{
    Performance of \rhucrl (this work), \hucrl \cite{curi2020efficient}, and a baseline with no exploration in adversarial-, action-, and parameter-robust settings on a pendulum swing-up task. 
    The baseline never learns a successful swing-up strategy due to insufficient exploration.
    \rhucrl and \hucrl learn a swing-up strategy but \rhucrl outperforms \hucrl as the perturbation increases.\looseness=-1
  }
  \label{fig:inverted_final}
\end{figure*}

    \textbf{Main contributions.} 
    \looseness -1 We design \rhucrl, the first practical \emph{provably} robust RL algorithm that is: (i) \emph{sample-efficient}, (ii) compatible with \emph{deep models}, and (iii) simulator-free as it addresses exploration on a real system. We establish rigorous general sample-complexity and regret guarantees for our algorithm, and we specialize them to Gaussian Process models, hence obtaining sublinear robust regret guarantees. While previous robust RL works have focused on different individual settings, we gather and summarize them all for the first time, and show particular instantiations of our algorithm in each of them: (i) Adversarial-robust RL, (ii) Action-robust RL, and (iii) Parameter-robust RL. Finally, we provide experiments that include different environments and settings, and we empirically demonstrate that \rhucrl outperforms or successfully competes with the state-of-the-art deep robust RL algorithms and other baselines.

    
\section{Related Work}
\label{sec:related_work}
\textbf{Robust RL.}
Robust Reinforcement Learning \citep{iyengar2005robust,nilim2005robust,wiesemann2013robust}
 typically uses the Zero-Sum Markov Games formalism introduced by \citet{littman1994markov,littman1996generalized}. 
From a control engineering perspective, this is known as $\mathcal{H}_\infty$ control \citep{bacsar2008h}, and \citet{bemporad2003min} solve this problem for {\em known} linear systems where the optimal robust policy is an affine function of the state. 
For Markov Games with unknown systems, \citet{lagoudakis2002value} introduce approximate value iteration whereas \citet{tamar2014scaling} and \citet{perolat2015approximate} present an approximate policy iteration scheme.
Although these works present error propagation schemes restricted to the linear setting, they do not address the problem of exploration explicitly, and instead assume access to a sampling distribution that has sufficient state coverage. 
\citet{zhang2020model} propose a model-based algorithm for finding robust policies assuming access to a simulator that is able to sample at arbitrary state-action pairs. 
Finally, \citet{bai2020provable} recently introduce an algorithm that provably and efficiently outputs a policy, but it is limited to the tabular setting. 
Instead, our approach does not require a generative model and considers the full exploration problem in a finite horizon scenario. 
Furthermore, our algorithm does not require tabular nor linear function approximation and it is compatible with deep neural networks dynamical models.

\textbf{Minimax Optimization Algorithms.}
\citet{pinto2017robust} propose to solve the minimax optimization via a stochastic gradient descent approach for both players for {\em adversarial-robust} RL algorithms. 
\citet{tessler2019action} introduce {\em action-robust} RL and policy and value iteration algorithms to solve these problems. 
\citet{rajeswaran2016epopt} introduce the \texttt{EPOpt} to solve {\em parameter-robust} problems. 
Finally, \citet{kamalaruban2020robust} propose to use a Stochastic-Gradient Langevin Dynamics algorithm to solve such problems via sampling instead of optimization. 
These algorithms are generally sample-inefficient as they do not explicitly explore but, given a model, they could be used to optimize a policy.
\looseness=-1

\textbf{Provable Model-Based RL.}
Model-Based RL has better empirical sample-efficiency than model-free variants \citep{deisenroth2011pilco,chua2018deep}. 
Furthermore, the celebrated UCRL algorithm by \citet{auer2009near} has provable sample-efficiency guarantees, albeit being intractable for non-tabular environments. 
Recently, \citet{curi2020efficient} instantiate the UCRL algorithm in continuous control problems using hallucinated control to explore.
We build upon their \hucrl algorithm and strictly generalize it to the robust RL setting.
Our robust policy search strategy is inspired by the work in the related bandit setting with Gaussian Processes, where \texttt{Stable-OPT} \citep{bogunovic2018adversarially} combines pessimism with optimism to similarly generalize the non-robust \texttt{GP-UCB} \citep{srinivas2010gaussian} algorithm and provably discover robust designs.


\section{Problem Statement}
    We consider a stochastic environment with states $\x \in \X \subseteq \R^\nstate$, agent actions $\u \in \U \subset \R^\ninp$, adversary actions $\uadv \in \Uadv \subset \R^\ninpadv$, and \iid~additive transition noise vector $\noise_\hi \in \R^\nstate$. 
    Both action sets are assumed to be compact, and the dynamics are given by:
    \begin{equation}
    	\x_{\hi+1} = f(\x_\hi, \u_\hi, \uadv_\hi) + \noise_\hi
    	\label{eq:stochastic_dynamic_system_additive}
    \end{equation}
    with $f \colon \X \times \U \times \Uadv \to \X$. 
    We assume the true dynamics $f$ are \emph{unknown} and consider the episodic setting over a finite time horizon $\Hi$. 
    After every episode (i.e., every $\Hi$ time steps), the system is reset to a known state $\x_0$. 
    In this work, we make the following assumptions regarding the stochastic environment and unknown dynamics:
    \begin{assumption}
        The dynamics $f$ in \cref{eq:stochastic_dynamic_system_additive} are $L_f$-Lipschitz continuous and, for all $\hi \in \lbrace 0, \dots,\Hi-1 \rbrace$, the individual entries of the noise vector $\noise_\hi$ are \iid~$\sigma$-sub-Gaussian.
      \label{as:dynamics_f_lipschitz}
    \end{assumption}
    At every time-step, the system returns a deterministic reward $r(\x_\hi, \u_\hi, \uadv_\hi)$, where $r: \X \times \U \times \Uadv \to \R$ is assumed to be known to the agent. 
    We consider time-homogeneous 
    agent policies $\pi \in \Pi$, $\pi \colon \X \to \U$, that select actions according to $\u_\hi = \pi(\x_\hi)$. Similarly, we consider adversary policies $\pi \in \Piadv$ on the common state space, i.e., $\piadv \colon \X \to \Uadv$, that select actions as $\uadv_\hi = \piadv(\x_\hi)$.
    For the sake of simplicity, we omit straightforward extensions such as initial-state distributions, unknown reward functions, or time-indexed policies that can be adressed using standard techniques \citep{chowdhury2019online}. 
    For now, we leave both $\Pi$ and $\Piadv$ unspecified, but in \Cref{sec:experiments}, we parameterize them via neural networks. \looseness=-1
    
    
    

    The performance of a pair of policies $(\pi, \piadv)$ on a given dynamical system $\tilde{f}$ is the episodic expected sum of returns:
    \begin{align}
        J(\tilde{f}, \pi, \piadv) &\coloneqq \E[\tau_{\tilde{f}, \pi, \piadv}]{\sum_{\hi=0}^{\Hi} r(\x_\hi, \u_\hi, \uadv_\hi) \, \bigg| \, \x_0}, \\
        \text{s.t. }\;  \x_{\hi+1}& = \tilde{f}(\x_\hi, \u_\hi, \uadv_\hi) + \noise_\hi, \nonumber
        \label{eq:performance} 
    \end{align}
    where $\tau_{\tilde{f},\pi,\piadv} = \left\{ (\x_{\hi-1}, \u_{\hi-1}, \uadv_{\hi-1}),  \x_{\hi} \right\}_{\hi=0}^\Hi$ is a random trajectory induced by the stochastic noise $\noise$, the dynamics $\tilde{f}$, and the policies $\pi$ and $\piadv$. 

    We use $\pi^{\star}$ to denote the optimal deterministic \emph{robust} policy from set $\Pi$ in case of true dynamics $f$, i.e.,
    \begin{equation}
        \pi^\star \in \arg \max_{\pi \in \Pi} \min_{\piadv \in \Piadv} J(f, \pi, \piadv). \label{eq:objective}
    \end{equation}
    Even when the true system dynamics are known, finding a robust policy is generally a challenging task for arbitrary policy sets, reward and transition functions. In the rest, we make an assumption that \cref{eq:objective} can be solved for a given dynamics, and in \Cref{ssec:practical_implementation}, we propose a concrete problem instantiation and algorithmic solution.  

    \textbf{Learning protocol.} 
        We consider the episodic setting in which, at every episode $\ti$, the learning algorithm selects both the agent's $\pi_\ti$ and a fictitious adversary's $\piadv_\ti$ policies. 
        The pair of policies ($\pi_\ti$, $\piadv_\ti$) is then deployed on the true system $f$, and a single realization of the trajectory $\tau_{f, \pi_\ti, \piadv_\ti}$ is observed and used to update the underlying statistical model. We summarize the general learning protocol in Algorithm~\ref{alg:R-HUCRL_protocol}.
        In the braking system example, this learning protocol implies that during training we are allowed to execute braking maneuvers as well as possible adversarial policies, e.g., changing the braking surface. 
        The execution of both policies during training is crucial to guarantee robust performance: The learner can actively look for the \emph{worst-case} adversarial policies that it might encounter during deployment and learn what to do when faced upon them.

        \begin{algorithm}[t!]
        \caption{Robust Model-based Reinforcement Learning}
        \label{alg:R-HUCRL_protocol}
            \begin{algorithmic}[1]
                \STATE \textbf{Require:} Calibrated dynamical model, reward function $r(\x_\hi, \u_\hi, \uadv_\hi)$, horizon $\Hi$, initial state $\x_0$
                \FOR{$t=1,2, \dots$}
                    \STATE Select ($\pi_\ti$,$\piadv_\ti$) using the current dynamical model
                    \FOR{$\hi = 1, \dots, \Hi - 1$}
                        \STATE $\x_{\hi,\ti} = f(\x_{\hi-1,\ti}, \pi_\ti(\x_{\hi-1,\ti}),  \piadv_\ti(\x_{\hi-1,\ti})) + \noise_{\hi,\ti}$ 
                    \ENDFOR 
                    \STATE Update statistical dynamical model with the $\Hi$ observations $\left\{ (\x_{\hi-1,t}, \u_{\hi-1,t}, \uadv_{\hi-1,t}),  \x_{\hi,t} \right\}_{\hi=0}^\Hi$
                    \STATE Reset the system to $\x_{0,t+1} = \x_0$
                \ENDFOR
            \end{algorithmic}
        \end{algorithm}
    
    \textbf{Performance metric.} 
        For a small fixed $\epsilon>0$, the goal is to output a robust policy $\pi_{\Ti}$ after $\Ti$ episodes such that:
        \begin{equation}
            \min_{\piadv \in \Piadv} J(f, \pi_\Ti, \piadv) \geq \min_{\piadv \in \Piadv} J(f, \pi^{\star}, \piadv) - \epsilon, \label{eq:PAC}
        \end{equation}
        where $\pi^{\star}$ is defined as in~\cref{eq:objective}. Hence, we consider the task of near-optimal robust policy identification, but we note that one can also measure the performance in terms of the robust cumulative regret as discussed in \cref{sec:theoretical_analysis}. Thus, the goal is to output the agent's policy with near-optimal robust performance when facing its own \emph{worst-case} adversary, and the adversary selects $\piadv$ \emph{after} the agent selects $\pi_\Ti$. Note that this is a strong   er robustness notion than just considering the worst-case adversary of the optimal policy, since, by letting $\piadv^* \in  \argmin_{\piadv \in \Piadv} J(f, \pi^{\star}, \piadv)$, we have $J(f, \pi_\Ti, \piadv^*) \geq \min_{\piadv \in \Piadv} J(f, \pi_\Ti, \piadv)$.
        \looseness=-1
        
 \textbf{Statistical Model.}
        In this work, we take a {\em model-based reinforcement learning (MBRL)} approach. That is, to learn the dynamics and discover a near-optimal robust policy, we consider algorithms that model and sequentially learn about $f$ from noisy state observations. The agent makes use of the observed data (collected within an episode) to simultaneously improve its estimate of the true dynamics $f$. 
        
        We use statistical estimation to probabilistically reason about dynamical models $\tilde{f}$ that are compatible with the observed data $\dataset_{1:\ti} = \left\{ \tau_{f, \pi_{\ti'}, \piadv_{\ti'}} \right\}_{\ti'=1}^{\ti}$. This can be done, e.g., by frequentist estimation of mean $\bmu_\ti(\x, \u, \uadv)$ and confidence $\bSigma^2_\ti(\x, \u, \uadv)$ estimators, or by taking a Bayesian perspective and considering the posterior distribution $\modelposterior$ over dynamical models that leads to $\bmu_{\ti}(\x, \u, \uadv) = \mathbb{E}_{\tilde{f} \sim \modelposterior} [\tilde{f}(\x, \u, \uadv) ]$ and $\bSigma_\ti^2(\x, \u, \uadv)= \mathrm{Var}[ \tilde{f}(\x, \u, \uadv)]$. In any case, we require the model to be {\em calibrated}: \looseness=-1

        \begin{assumption}[Calibrated model]
            The statistical model is {\em calibrated} w.r.t.~$f$ in \cref{eq:stochastic_dynamic_system_additive}, so that with $\bsigma_\ti(\cdot) = \mathrm{diag}(\bSigma_\ti(\cdot))$ and a non-decreasing sequence of parameters $\lbrace \beta_\ti \rbrace_{t\geq 1} \in \R_{>0}$, each depending on $\delta \in (0,1)$, it holds jointly for all $\ti \geq 1$ and $\x, \u, \uadv \in \X \times \U \times \Uadv$ that $| f(\x, \u, \uadv) - \bmu_{\ti-1}(\x, \u, \uadv) | \leq \beta_{\ti} \bsigma_{\ti-1}(\x, \u, \uadv)$ element-wise, with probability at least $1 - \delta$.\label{as:well_calibrated_model}
        \end{assumption}
        
        This assumption is important for exploration: if the model is not calibrated, then using the epistemic uncertainty of such a model will not \emph{provably} guide exploration.
        For dynamics with finite norm in a known RKHS space, \Cref{as:well_calibrated_model} is satisfied \citep{srinivas2010gaussian,chowdhury2017kernelized}. 
        In case of neural network models, we can recalibrate one-step ahead predictions \citep{malik2019calibrated}. 

\section{The Robust \hucrl Algorithm (\rhucrl) } \label{sec:RHUCRL}

We now develop our \rhucrl algorithm that can be used in Algorithm~\ref{alg:R-HUCRL_protocol} (at Line 3), for selecting policies $\pi_\ti$ and $\piadv_\ti$. \rhucrl takes the sequence of confidence parameters $\lbrace \beta_t \rbrace_{t\geq 1}$ from \Cref{as:well_calibrated_model} as input. The main idea is to use our probabilistic model of $f$ to {\em optimistically} select $\pi_\ti$ and {\em pessimistically} select $\piadv_t$ w.r.t.~all plausible dynamics.\looseness=-1

\subsection{Optimistic and Pessimistic Policy Evaluation}
    For any two policies $\pi$ and $\piadv$, we provide the \textbf{(o)}ptimistic and \textbf{(p)}essimistic estimate of $J(f, \pi, \piadv)$ at time $\ti$, and we denote them with $J_\ti^{(o)}(\pi, \piadv)$ and $J_\ti^{(p)}(\pi, \piadv)$, respectively. 
    Hereby, the optimistic estimate is given by:
    \begin{subequations}
    \begin{align}
       J_\ti^{(o)} (\pi, \piadv) &\coloneqq \max_{\etao} J (f^{(o)}, \pi, \piadv), \label{eq:optimistic_performance} \\
        \text{s.t. }\; & \fo(\x, \u, \uadv) = \mu_{\ti-1} \big(\x, \u, \uadv) \nonumber  \\ &\quad + \beta_{\ti} \etao(\x, \u, \uadv) \Sigma_{\ti-1}^{1/2}(\x, \u, \uadv). \label{eq:optimistic_transition}
    \end{align} \label{eq:optimistic}%
    \end{subequations}
    Similarly, the pessimistic estimate is given by:
    \begin{subequations}
    \begin{align}
        J_\ti^{(p)}(\pi, \piadv) &\ \coloneqq  \min_{\etap}  J(f^{(p)}, \pi, \piadv) \label{eq:pessimistic_performance} \\ \nonumber
        \text{s.t. }\; & \fp(\x, \u, \uadv) = \mu_{\ti-1} \big(\x, \u, \uadv) \\
        &\quad + \beta_{\ti} \etap(\x, \u, \uadv) \Sigma_{\ti-1}^{1/2}(\x, \u, \uadv).\label{eq:pessimistic_transition}
    \end{align} \label{eq:pessimistic}%
    \end{subequations}
    We note that $J_\ti^{(o)}$ and $J_\ti^{(p)}$ represent upper and lower bounds on the performance of the policies $\pi$, $\piadv$ in case of the true dynamics $f$. 
    These estimates are computed by finding the most optimistic (pessimistic) dynamics compatible with the data. Note that the optimistic/pessimistic outcome is selected via decision variables $\etao/\etap: \X \times \U \times \Uadv \to [-1, 1]^\nstate$, which are functions of the state as well as actions of both players. These select among all plausible outcomes of the dynamics bounded within the epistemic uncertainty over $f$.
    When the policies are fixed and clear from context, with slight abuse of notation we write $\eta(\x, \u, \uadv) = \eta(\x, \pi(\x), \piadv(\x)) = \eta(\x)$. 
    A crucial observation is that both \cref{eq:optimistic,eq:pessimistic} can be viewed as two optimal control problems, where the decision variables $\etao/\etap$ are hallucinated control policies, whose effect is bounded by the model epistemic uncertainty. 
    We can use optimal control algorithms \citep{camacho2013model} to maximize/minimize the sum of rewards following the reparameterized dynamics $\fo/\fp$. \looseness=-1

\subsection{The \texorpdfstring{\rhucrl} Algorithm}
    Given both the pessimistic and optimistic performance estimates from the previous section, we are now ready to state our algorithm. At each episode $\ti$, \rhucrl selects the agent and adversary policies as follows:
    \begin{subequations}
    \begin{align}
        \pi_\ti &\in \argmax_{\pi \in \Pi} \min_{\piadv \in \Piadv} J_{\ti}^{(o)}(\pi, \piadv), \label{eq:rhucrl:learner} \\ 
        \piadv_\ti &\in \argmin_{\piadv \in \Piadv} J_{\ti}^{(p)}(\pi_\ti, \piadv) .  \label{eq:rhucrl:adversary}
    \end{align} \label{eq:rhucrl}%
    \end{subequations}
    \looseness -1 
    Thus, \rhucrl selects the most optimistic robust policy for the agent player in \cref{eq:rhucrl:learner}. 
    The adversary player picks the most pessimistic policy given the selected agent policy in \cref{eq:rhucrl:adversary}. 
    When the adversarial policy space $\Piadv$ is a singleton, \rhucrl reduces to the \hucrl algorithm.
    
    Finally, after a total of $\Ti$ episodes, the algorithm outputs an agent policy $\pifinal$ given by: 
    \begin{equation}
        \pifinal = \pi_{\ti^\star}\; \text{ s.t. } \; \ti^\star \in \argmax_{\ti \in \{1, \ldots, \Ti\} } J_{\ti}^{(p)}(\pi_\ti, \piadv_\ti). \label{eq:rhucrl:output}
    \end{equation}
    There is no extra computational cost in identifying the output policy as $J_{\ti}^{(p)}(\pi_\ti, \piadv_\ti)$ is already computed by the learner in \cref{eq:rhucrl:adversary} in every episode $\ti$. Thus, the algorithm simply returns the encountered agent policy with maximum pessimistic robust performance.

\subsection{Practical Implementation} \label{ssec:practical_implementation}
    \looseness -1 To implement \alg, we parameterize $\pi$, $\piadv$, and $\eta$ using neural network policies and train them using actor-critic algorithms.
    We remark that $\piadv$ in the agent optimization \eqref{eq:rhucrl:learner} and $\piadv$ in the adversary optimization \eqref{eq:rhucrl:adversary} are different, as are the pessimistic and optimistic hallucination policies $\etap$ and $\etao$. 
    In particular, we learn a critic via fitted Q-iteration \citep{perolat2015approximate,antos2008fitted} and then differentiate through the critic using pathwise gradients \citep{mohamed2019monte,silver2014deterministic} using stochastic gradient ascent for the agent and stochastic gradient descent for the adversary. 

\subsection{Theoretical Analysis}
    \label{sec:theoretical_analysis}
    In this section, we theoretically analyze the performance of the \rhucrl algorithm. First, we use the notion of \emph{robust cumulative regret}\footnote{Similar notions of robust cumulative regret have been analyzed before in bandit optimization  \citep[see, e.g.,][]{kirschner2020distributionally}.} $$R_{\Ti} = \sum_{t=1}^T \min_{\piadv \in \Piadv} J(f, \pi^\star, \piadv) - \min_{\piadv \in \Piadv} J(f, \pi_\ti, \piadv),$$  
    \looseness -1 which measures the difference in performance between the optimal robust policy and the sequence of agent's policies $\lbrace \pi_1, \dots, \pi_\Ti \rbrace$ selected at every episode in \cref{eq:rhucrl:learner}. Below (see \Cref{thm:exploration:regret:general_regret_bound}) we establish that \rhucrl achieves sublinear regret, i.e., $R_{\Ti}/\Ti\to 0$ for $T\to \infty$.
    In addition to the robust regret notion, we also analyze the recommendation rule of \rhucrl via \cref{eq:rhucrl:output}, and the number of episodes $T$ required to output a near-optimal robust policy (see \Cref{cor:simple_regret:general_regret_bound}). We start by analyzing a general robust model-based RL framework, and later on, we demonstrate the utility of the obtained results by specializing them to the important special case of Gaussian Process dynamics models. We defer all the proofs from this section to \Cref{sec:proofs_main}. Before stating our main theoretical results, we introduce some additional assumptions:
    \begin{assumption}[Lipschitz continuity]
        \looseness -1 
        At every episode $\ti$, the functions $\bsigma_\ti$, any agent's and adversary's policies $\pi_{\ti} \in \Pi$, $\piadv_{\ti} \in \Piadv$, and the reward $r(\cdot, \cdot, \cdot)$ are Lipschitz continuous with respective constants $L_\sigma$, $L_\pi$, $L_{\piadv}$ and $L_r$. 
        \label{as:model_predictions_lipschitz}
        \label{as:pi_lipschitz}
        \label{as:piadv_lipschit}
        \label{as:action_set_compact}
        \label{as:reward_lipschitz}
    \end{assumption}
    The previous assumption is mild and has been used in non-robust model-based RL,  see, e.g., \citet{curi2020efficient}, where it is noted that neural networks with Lipschitz-continuous non-linearities (or GPs with Lipschitz continuous kernels) output Lipschitz-continuous predictions.
    Furthermore, the policy classes $\Pi$ and $\Piadv$, as well as the reward functions, are typically known and designed in a way that is compatible with the previous assumption. 
    
    Both the robust regret and sample complexity rates that we analyze depend on the difficulty of learning the underlying statistical model. Models that are easy to learn typically require fewer samples and allow algorithms to make better decisions sooner. To express the difficulty of learning the imposed calibrated model class, we use the following model-based complexity measure:
    \begin{equation}
        \Gamma_\Ti \coloneqq \max_{\tilde{\dataset}_{1:\Ti}} \sum_{\ti=1}^{\Ti} \sum_{(\x,\u, \uadv) \in \tilde{\dataset}_{t} } \|\bsigma_{\ti-1}(\x, \u, \uadv)\|^2_2 \label{eq:complexity_measure}
    \end{equation}
    where each $\tilde{\dataset}_{\ti} \subset \lbrace \X \times \U \times \Uadv \rbrace^H$. This quantity has a worst-case flavor as it considers the data (collected during $\Ti$ episodes by any algorithm) that lead to maximal total predictive uncertainty of the model. For the special case of RKHS/GP dynamics models, we show below that this quantity can be effectively bounded, and the bound is sublinear (in the number of episodes $\Ti$) for most commonly used kernel functions. \looseness=-1
    
    

\begin{figure*}[t]
  \centering\includegraphics[width=\textwidth]{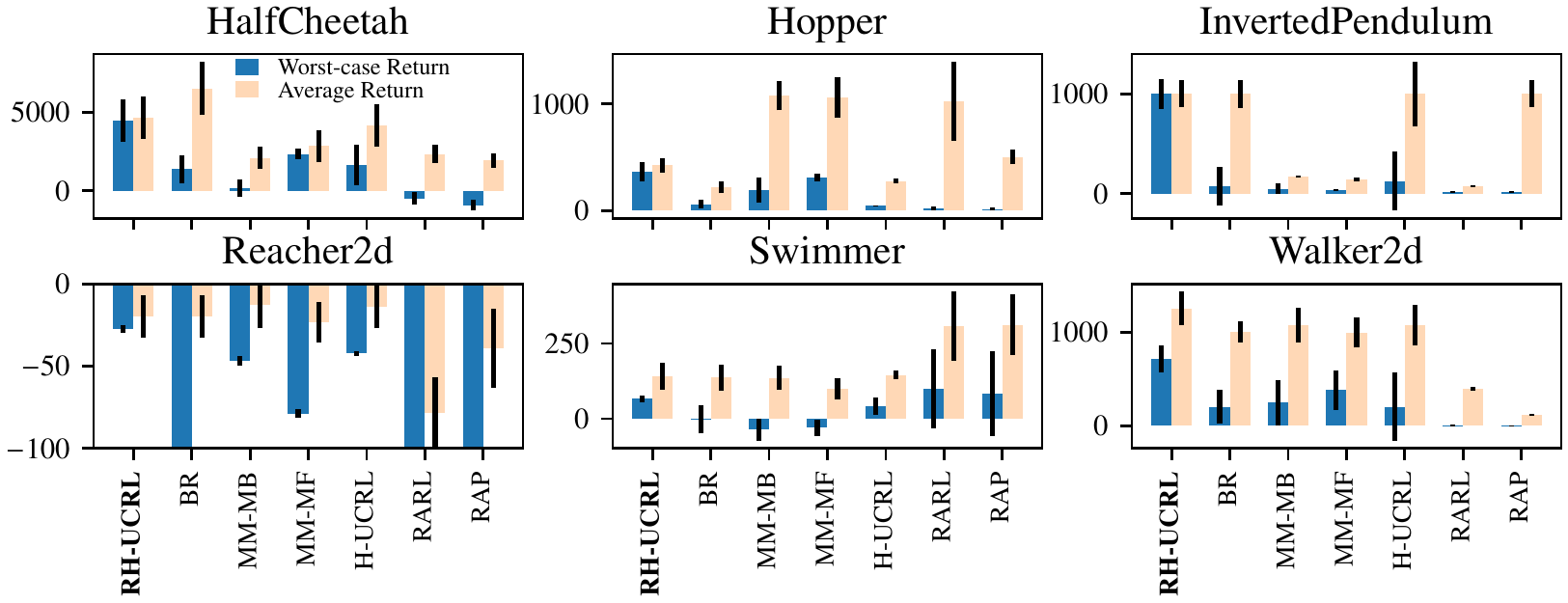}
    \vspace{-1em}
    
    \caption{
    Worst-case and average return of different algorithms in the Adversarial-Robust Setting in Mujoco tasks. 
    \alg outperforms the other algorithms in terms of worst-case return. 
    The non-robust baseline, \hucrl, has good average performance but poor worst-case performance (e.g., Inverted Pendulum). 
    The deep robust RL baselines have worse sample complexity and often underperform. 
    Our ablations are also non-robust, since exploration of both {\em agent} and {\em adversary} is crucial here to achieve robust performance.
  }
  \label{fig:adversarial_robust_final}
\end{figure*}

    \textbf{General results.}
    Now, we can state the main result of this section. In the following theorem, we bound the robust cumulative regret incurred by the policies from \cref{eq:rhucrl:learner}.
    \begin{restatable}{theorem}{generalregretbound} \label{thm:exploration:regret:general_regret_bound}
      Under \Cref{as:dynamics_f_lipschitz,as:pi_lipschitz,as:reward_lipschitz,as:well_calibrated_model,as:model_predictions_lipschitz}, let $C= (1 + L_f + 2 L_{\sigma})(1 + L_{\pi}^2 + L_{\piadv}^2)^{1/2}$ and  
      let $\x_{\ti,\hi} \in \X$, $\u_{\ti,\hi} \in \U$, $\uadv_{\ti,\hi} \in \Uadv$ for all $\ti,\hi>0$. Then, for any fixed $\Hi \geq 1$, with probability at least $1-\delta$, the robust cumulative regret of \alg is upper bounded by:
      $$
        R_\Ti = \Or{L_r C^\Hi \beta_{\Ti}^\Hi \Hi^{3/2} \sqrt{ \Ti   \, \Gamma_\Ti}}.
      $$
    \end{restatable}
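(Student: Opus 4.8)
The plan is to reduce the robust cumulative regret to a sum of per-episode gaps between the optimistic and pessimistic value estimates evaluated at the \emph{executed} policy pair, and then to bound each such gap by the epistemic uncertainty accumulated along a trajectory.

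\textbf{Step 1 (collapsing the minimax structure).} First I would work on the calibration event of \Cref{as:well_calibrated_model}, which holds with probability at least $1-\delta$. On this event the true dynamics $f$ are realizable within the confidence reparametrization, so $J_\ti^{(p)}(\pi,\piadv) \le J(f,\pi,\piadv) \le J_\ti^{(o)}(\pi,\piadv)$ for every $\pi,\piadv$. Writing $r_\ti$ for the per-episode regret $\min_{\piadv}J(f,\pi^\star,\piadv)-\min_{\piadv}J(f,\pi_\ti,\piadv)$, optimism together with the agent rule \cref{eq:rhucrl:learner} gives
\[\min_{\piadv}J(f,\pi^\star,\piadv) \le \max_{\pi}\min_{\piadv}J_\ti^{(o)}(\pi,\piadv) = \min_{\piadv}J_\ti^{(o)}(\pi_\ti,\piadv),\]
while pessimism together with the adversary rule \cref{eq:rhucrl:adversary} gives $\min_{\piadv}J(f,\pi_\ti,\piadv)\ge J_\ti^{(p)}(\pi_\ti,\piadv_\ti)$. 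Combining these with $\min_{\piadv}J_\ti^{(o)}(\pi_\ti,\piadv)\le J_\ti^{(o)}(\pi_\ti,\piadv_\ti)$ yields the single-pair bound
\[r_\ti \le J_\ti^{(o)}(\pi_\ti,\piadv_\ti) - J_\ti^{(p)}(\pi_\ti,\piadv_\ti).\]
This is the key step specific to the robust setting: it collapses the max--min into the same kind of optimism--pessimism gap analysed in the non-robust \hucrl, now measured at the pair $(\pi_\ti,\piadv_\ti)$ that is actually deployed.

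\textbf{Step 2 (bounding the gap via a Grönwall recursion).} Next I would couple the noise and roll out the two optimal control problems \cref{eq:optimistic,eq:pessimistic} under the shared policies $\pi_\ti,\piadv_\ti$ from the common $\x_0$, giving trajectories $\{\x_\hi^{(o)}\}$ and $\{\x_\hi^{(p)}\}$. Since both $\fo$ and $\fp$ lie within $\beta_\ti\bsigma_{\ti-1}$ of $\mu_{\ti-1}$ element-wise, they differ pointwise by at most $2\beta_\ti\bsigma_{\ti-1}$, and the Lipschitz assumptions (\Cref{as:dynamics_f_lipschitz,as:model_predictions_lipschitz}) give a recursion for $\Delta_\hi \coloneqq \|\x_\hi^{(o)}-\x_\hi^{(p)}\|$ of the form $\Delta_{\hi+1}\le \beta_\ti C\,\Delta_\hi + 2\beta_\ti\|\bsigma_{\ti-1}(\x_\hi^{(p)})\|$, where the contraction factor packages $L_f$, $L_\sigma$ and the policy constants into $C$ (using $\beta_\ti\ge 1$ to absorb $L_f+2\beta_\ti L_\sigma\le\beta_\ti(1+L_f+2L_\sigma)$ and $\sqrt{1+L_\pi^2+L_{\piadv}^2}$ for the joint state--action map). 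Unrolling from $\Delta_0=0$ gives $\Delta_\hi \lesssim \beta_\ti^\Hi C^{\Hi-1}\sum_{j<\hi}\|\bsigma_{\ti-1}(\x_j^{(p)})\|$; bounding the reward difference by $L_r\sqrt{1+L_\pi^2+L_{\piadv}^2}\sum_\hi\Delta_\hi$ and taking expectation over the coupled noise produces the $L_r C^\Hi\beta_\ti^\Hi\Hi$ prefactor times the accumulated uncertainty. This is precisely where the horizon-exponential constants arise.

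\textbf{Step 3 (summation and complexity measure).} Finally I would sum over episodes, use $\beta_\ti\le\beta_\Ti$, and apply Cauchy--Schwarz across the $\Ti\Hi$ visitations to convert $\sum_\ti\sum_\hi\|\bsigma_{\ti-1}\|$ into $\sqrt{\Ti\Hi}\,\sqrt{\sum_\ti\sum_\hi\|\bsigma_{\ti-1}\|^2}$, then invoke the worst-case measure $\Gamma_\Ti$ of \cref{eq:complexity_measure} to dominate the inner sum, giving $R_\Ti = \Or{L_r C^\Hi\beta_\Ti^\Hi\Hi^{3/2}\sqrt{\Ti\,\Gamma_\Ti}}$. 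I expect the main obstacle to be the stochastic bookkeeping: the uncertainties in the recursion are evaluated along the random, coupled hallucinated rollouts rather than the observed trajectory, so I would keep everything inside the expectation over the noise and rely on the dataset-maximizing definition of $\Gamma_\Ti$ to upper bound the uncertainty summed over \emph{any} realized set of $\Hi$-step visitations, which avoids having to relate hallucinated trajectories to the executed data. Pinning down the contraction constant $C$ exactly, and justifying the $\beta_\ti\ge 1$ convention so that the final factor is $\beta_\Ti^\Hi$, are the remaining delicate points.
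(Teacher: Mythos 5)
Your Step 1 is exactly the paper's \Cref{lem:instantaneous_regret_bound} (optimism plus the agent rule upper-bounds the benchmark, pessimism plus the adversary rule lower-bounds the learner's robust value), and your overall skeleton --- instantaneous regret bounded by the optimism--pessimism gap, then Cauchy--Schwarz/Jensen into $\Gamma_\Ti$ --- matches the paper. The genuine gap is in Steps 2--3. By coupling the optimistic and pessimistic rollouts \emph{to each other}, your recursion accumulates epistemic uncertainty $\sum_{\hi}\|\bsigma_{\ti-1}(\x^{(p)}_{\hi},\dots)\|_2$ evaluated along the \emph{hallucinated} pessimistic trajectory, and you then invoke $\Gamma_\Ti$ ``over any realized set of $\Hi$-step visitations'' to dominate it. But $\Gamma_\Ti$ in \cref{eq:complexity_measure} is only sublinear because its maximization couples the evaluation points with the data that conditions the posterior: it dominates sums of variances taken along points that are actually appended to the model, which is exactly what makes the GP bound $\Gamma_\Ti \le 2e\nstate\Hi\,\gamma_{\nstate\Hi\Ti}$ go through (in that proof the points whose variance is summed are the same points being sequentially conditioned on). Hallucinated states are never observed, so $\bsigma_{\ti-1}$ there need not shrink; if the hallucinated rollouts persistently visit unexplored regions, the sum you need to control grows linearly in $\Ti$, while the decoupled reading of $\Gamma_\Ti$ (posterior from the algorithm's data, arbitrary evaluation points) is itself worst-case linear in $\Ti$ and admits no maximum-information-gain bound. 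Either way, the final step of your plan does not close.

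The repair is the paper's route (\Cref{lem:performance_difference_bound}): insert $J(f,\pi_\ti,\piadv_\ti)$ via the triangle inequality and compare \emph{each} of $\x^{(o)}_{\hi,\ti}$ and $\x^{(p)}_{\hi,\ti}$ to the \emph{true} trajectory $\x_{\hi,\ti}$ under coupled noise, using \Cref{lemma:deviation_of_the_optimistic_pessimistic_trajectory} for arbitrary plausible models $\tilde{f}\in\mathcal{M}_\ti$. This costs only a factor $4$ instead of your $2$ in front of $L_r\beta_\Ti^\Hi C^\Hi$, but the resulting uncertainty is summed along the executed trajectory --- precisely the episode-$\ti$ data added to the model, hence a feasible realization inside the max defining $\Gamma_\Ti$ (and the expectation over noise is handled since $\Gamma_\Ti$ dominates every realization). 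Anchoring at $f$ also fixes a second, smaller flaw in your recursion: the optimal hallucination selections $\etao,\etap$ are arbitrary $[-1,1]^{\nstate}$-valued functions, so $\fo,\fp$ need not be Lipschitz and $\Delta_{\hi+1}\le \beta_\ti C\,\Delta_\hi+\dots$ cannot use their Lipschitz constants directly; the paper only ever invokes Lipschitzness of $f$ and of $\bsigma$ (\Cref{as:dynamics_f_lipschitz,as:model_predictions_lipschitz}), adding and subtracting $f^{\pi_\ti,\piadv_\ti}$ and $\bsigma^{\pi_\ti,\piadv_\ti}$ at the appropriate states to obtain the contraction with constant $L_{f,\pi}+2\beta_{\ti-1}L_{\sigma,\pi}$.
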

    This regret bound shows that \rhucrl achieves sublinear robust regret when $\beta_{\Ti}^\Hi \sqrt{\Gamma_\Ti} = o(\sqrt{T})$. Below, we show a concrete example of GP models where this is indeed the case. The obtained bound also depends on the Lipschitz constants from \Cref{as:piadv_lipschit}, as well as the episode length $H$ that we assume is constant.
    The dependency of the regret bound on the problem dimension is hidden in $\Gamma_\Ti$, while $\beta_T$ depends also on $\delta $ (see \Cref{as:well_calibrated_model}). \looseness=-1
    
    Next, we characterize the number of episodes (samples) required by \rhucrl to output $\epsilon$-optimal robust policy. Our analysis upper bounds the optimal robust performance according to the confidence bounds from \Cref{as:well_calibrated_model}, but also addresses the challenge of characterizing the impact of exploring different adversary policies in \cref{eq:rhucrl:adversary}.
    
    \begin{restatable}{corollary}{simpleregretbound}
    \label{cor:simple_regret:general_regret_bound}
    Consider the assumptions and setup of \Cref{thm:exploration:regret:general_regret_bound}, and suppose that
        \begin{equation}
             \frac{T}{\beta_{\Ti}^{2H}\Gamma_{\Ti}} \geq \frac{16L_r^2 \Hi^3C^{2\Hi}}{ \epsilon^2},
        \end{equation}
    for some fixed $\epsilon > 0$ and $H\geq 1$. Then, with probability at least $1-\delta$ after $T$ episodes, \alg achieves:
    \begin{equation}
        \min_{\piadv \in \Piadv} J(f, \hat{\pi}_\Ti, \piadv) \geq \min_{\piadv \in \Piadv} J(f, \pi^{\star}, \piadv) - \epsilon,
    \end{equation}
    where $\pifinal$ is the output of \alg, reported according to \cref{eq:rhucrl:output}, and $\pi^{\star}$ is the optimal robust policy given in \cref{eq:objective}. \looseness=-1

    \looseness=-1
    \end{restatable}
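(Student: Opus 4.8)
The plan is to derive the simple-regret guarantee from the cumulative bound of \Cref{thm:exploration:regret:general_regret_bound} via an online-to-batch (best-iterate) argument, exploiting that the recommendation rule \eqref{eq:rhucrl:output} returns the iterate with the largest pessimistic robust value. Write $V^\star := \min_{\piadv \in \Piadv} J(f,\pi^\star,\piadv)$ and $V_\ti := \min_{\piadv \in \Piadv} J(f,\pi_\ti,\piadv)$, and condition throughout on the probability-$(1-\delta)$ calibration event of \Cref{as:well_calibrated_model}, on which $f$ is realizable within the plausible set that defines $J_\ti^{(o)}$ and $J_\ti^{(p)}$. This yields the three facts I will use: (i) \emph{optimism}, $J_\ti^{(o)}(\pi,\piadv) \ge J(f,\pi,\piadv)$; (ii) \emph{pessimism}, $J_\ti^{(p)}(\pi,\piadv) \le J(f,\pi,\piadv)$; and (iii) the selection rules \eqref{eq:rhucrl:learner}--\eqref{eq:rhucrl:adversary}, which give $\min_{\piadv} J_\ti^{(o)}(\pi_\ti,\piadv) \ge \min_{\piadv} J_\ti^{(o)}(\pi^\star,\piadv) \ge V^\star$ (the last inequality by optimism applied at $\pi^\star$) and $J_\ti^{(p)}(\pi_\ti,\piadv_\ti) = \min_{\piadv} J_\ti^{(p)}(\pi_\ti,\piadv)$.

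First I would certify the recommended policy. By the output rule \eqref{eq:rhucrl:output}, $J_{\ti^\star}^{(p)}(\pi_{\ti^\star},\piadv_{\ti^\star}) = \max_{\ti} J_\ti^{(p)}(\pi_\ti,\piadv_\ti) \ge \frac{1}{\Ti}\sum_{\ti} J_\ti^{(p)}(\pi_\ti,\piadv_\ti)$. The step that handles the exploration of \emph{different} adversaries is to turn the deployed pessimistic value into a lower bound on the true robust value of $\pifinal$: since pessimism holds for every $\piadv$, taking the minimum over $\piadv$ preserves it, so $J_{\ti^\star}^{(p)}(\pi_{\ti^\star},\piadv_{\ti^\star}) = \min_{\piadv} J_{\ti^\star}^{(p)}(\pi_{\ti^\star},\piadv) \le \min_{\piadv} J(f,\pi_{\ti^\star},\piadv) = \min_{\piadv} J(f,\pifinal,\piadv)$. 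Thus the pessimistic value at the adversary-optimal $\piadv_{\ti^\star}$ lower-bounds the true worst-case performance of $\pifinal$ even though the true worst-case adversary may differ from $\piadv_{\ti^\star}$. Chaining gives $\min_{\piadv} J(f,\pifinal,\piadv) \ge \frac{1}{\Ti}\sum_{\ti} J_\ti^{(p)}(\pi_\ti,\piadv_\ti)$.

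Next I would lower-bound each summand by $V^\star$ minus the per-episode confidence gap. Since $V^\star \le \min_{\piadv} J_\ti^{(o)}(\pi_\ti,\piadv) \le J_\ti^{(o)}(\pi_\ti,\piadv_\ti)$, we obtain $J_\ti^{(p)}(\pi_\ti,\piadv_\ti) \ge V^\star - \big(J_\ti^{(o)}(\pi_\ti,\piadv_\ti) - J_\ti^{(p)}(\pi_\ti,\piadv_\ti)\big)$, where the gap is evaluated at the \emph{deployed} pair $(\pi_\ti,\piadv_\ti)$. Averaging yields $\min_{\piadv} J(f,\pifinal,\piadv) \ge V^\star - \frac{1}{\Ti}\sum_{\ti}\big(J_\ti^{(o)}(\pi_\ti,\piadv_\ti) - J_\ti^{(p)}(\pi_\ti,\piadv_\ti)\big)$. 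The sum on the right is exactly the quantity that the proof of \Cref{thm:exploration:regret:general_regret_bound} controls (it dominates $R_\Ti$ term by term through the same optimism/pessimism sandwich), so $\sum_{\ti}\big(J_\ti^{(o)} - J_\ti^{(p)}\big) \le 4 L_r C^\Hi \beta_\Ti^\Hi \Hi^{3/2}\sqrt{\Ti\,\Gamma_\Ti}$. Dividing by $\Ti$ gives an average gap of $4 L_r C^\Hi \beta_\Ti^\Hi \Hi^{3/2}\sqrt{\Gamma_\Ti/\Ti}$, and the hypothesis $\Ti/(\beta_\Ti^{2\Hi}\Gamma_\Ti) \ge 16 L_r^2 \Hi^3 C^{2\Hi}/\epsilon^2$ forces this to be at most $\epsilon$ (square the target inequality $\epsilon \ge 4 L_r C^\Hi \beta_\Ti^\Hi \Hi^{3/2}\sqrt{\Gamma_\Ti/\Ti}$ to recover the stated condition), which establishes the claim.

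The hard part is conceptual rather than computational and lives in the second paragraph: both the recommendation and the cumulative bound are driven by $\piadv_\ti$, which minimizes the \emph{pessimistic} estimate, whereas the quantity we must control is the true worst-case value against a possibly different adversary. The argument closes precisely because pessimism is a pointwise lower bound uniform over $\Piadv$, so minimizing it still yields a valid lower bound on $V_{\ti}$, while deploying $\piadv_\ti$ is what localizes the confidence gap on the trajectories whose cumulative predictive uncertainty $\Gamma_\Ti$ bounds. A minor technical point I would verify is that the constant emerging from the proof of \Cref{thm:exploration:regret:general_regret_bound} is indeed $4$, so that it matches the $16$ in the hypothesis after squaring; if that theorem is stated only up to $\Or{\cdot}$, I would instead carry the explicit constant through its proof.
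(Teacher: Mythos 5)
Your proposal is correct and follows essentially the same route as the paper's proof: the paper introduces the auxiliary quantity $\bar{r}(\pi_\ti) = \min_{\piadv \in \Piadv} J(f,\pi^\star,\piadv) - \min_{\piadv \in \Piadv} J_\ti^{(p)}(\pi_\ti,\piadv)$, observes that the output rule \eqref{eq:rhucrl:output} picks the iterate minimizing $\bar{r}$ (your ``max of pessimistic values $\geq$ average'' step, stated in mirror image), uses pessimism to get $r(\pifinal) \leq \bar{r}(\pifinal)$, bounds each $\bar{r}(\pi_\ti)$ by the optimism--pessimism gap $J_\ti^{(o)}(\pi_\ti,\piadv_\ti) - J_\ti^{(p)}(\pi_\ti,\piadv_\ti)$ via Lemmas~\ref{lem:instantaneous_regret_bound} and \ref{lem:performance_difference_bound}, and closes with the same Cauchy--Schwarz/Jensen chain to $\sqrt{\Ti\,\Gamma_\Ti}$ and the same explicit constant $4$ (hence the $16$ after squaring). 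Your identification of the conceptually load-bearing step --- that pessimism holds uniformly over $\Piadv$, so $\min_{\piadv} J_\ti^{(p)}(\pi_\ti,\piadv) \leq \min_{\piadv} J(f,\pi_\ti,\piadv)$ even though $\piadv_\ti$ need not be the true worst-case adversary --- is exactly the observation the paper's proof rests on.
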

     
    \textbf{Gaussian Process Models.} We specialize the regret bound obtained in \cref{thm:exploration:regret:general_regret_bound} to the case of Gaussian Process (GP) models. 
    GPs are popular statistical models that are frequently used to model unknown dynamics \citep{deisenroth2011pilco,kamthe2018data,curi2020structured}. 
    These models are very expressive due to a versatility of possible kernel functions, and can naturally differentiate between aleatoric noise and epistemic uncertainty. 
    Moreover, GPs are known to be provably well-calibrated when the unknown dynamics $f$ are $B_f$-smooth as measured by the GP kernel. 
    
    In \cref{section:gp_models}, we recall the GP maximum information gain (MIG) which is a kernel-dependent quantity (first introduced by \citet{srinivas2010gaussian}), that is frequently used in various GP optimization works to characterize complexity of learning a GP model. Sublinear upper bounds for MIG are known (c.f.~ \citet{srinivas2010gaussian}) for most popularly used kernels (e.g., linear, squared-exponential, etc.), as well as for their compositions, e.g., additive kernels \cite{krause2011contextual}. We recall the known results and use MIG to express $\beta_T$ and upper bound $\Gamma_{\Ti}$ in \cref{thm:exploration:regret:general_regret_bound}.  For example, when we use independent GP models with either (i) linear or (ii) squared-exponential kernels, for every component, we obtain the following \emph{sublinear} (in $T$) regret bounds $O(\Hi^{3/2} \nstate \left[ (\nstate + \ninp + \ninpadv)\ln(\nstate \Ti \Hi) \right]^{(\Hi + 1)/ 2}  \sqrt{T} )$ and $O(\Hi^{3/2} \nstate \left[ \ln(\nstate \Ti \Hi) \right]^{(\nstate + \ninp + \ninpadv)(\Hi + 1)/ 2}  \sqrt{T} )$, respectively.
    
    Finally, we note that the previously used MIG bounds require $\X$ to be compact, which does not hold under the considered noise model in \Cref{as:reward_lipschitz}. By bounding the domain w.h.p., \citet{curi2020efficient} show that this only increases the MIG bounds (e.g., in case of the squared-exponential kernel) by at most a $\text{polylog}(T)$ factor.  


\section{Robust RL Applications and Experiments} \label{sec:experiments}\label{sec:applications}

\looseness -1 We now discuss concrete instantiations of \alg for three important robust RL scenarios: (i) {\em adversarial-robustness}, (ii) {\em action-robustness}, and (iii) {\em parameter-robustness}. 
In all of the above scenarios, we experimentally demonstrate that \alg outperforms or successfully competes with the state-of-the-art variants designed specifically for these settings. 
In \Cref{sec:extra_experiments}, we present experimental details and additional results. 


\textbf{Experimental Environments.} 
We use the Mujoco suite \citep{todorov2012mujoco} to demonstrate the effectiveness of our algorithms in all the considered robust-RL settings. 
In particular, we use the Half Cheetah, Hopper, Inverted Pendulum, Reacher, Swimmer, and Walker robots. 

    

\begin{figure*}[t]
  \centering\includegraphics[width=\textwidth]{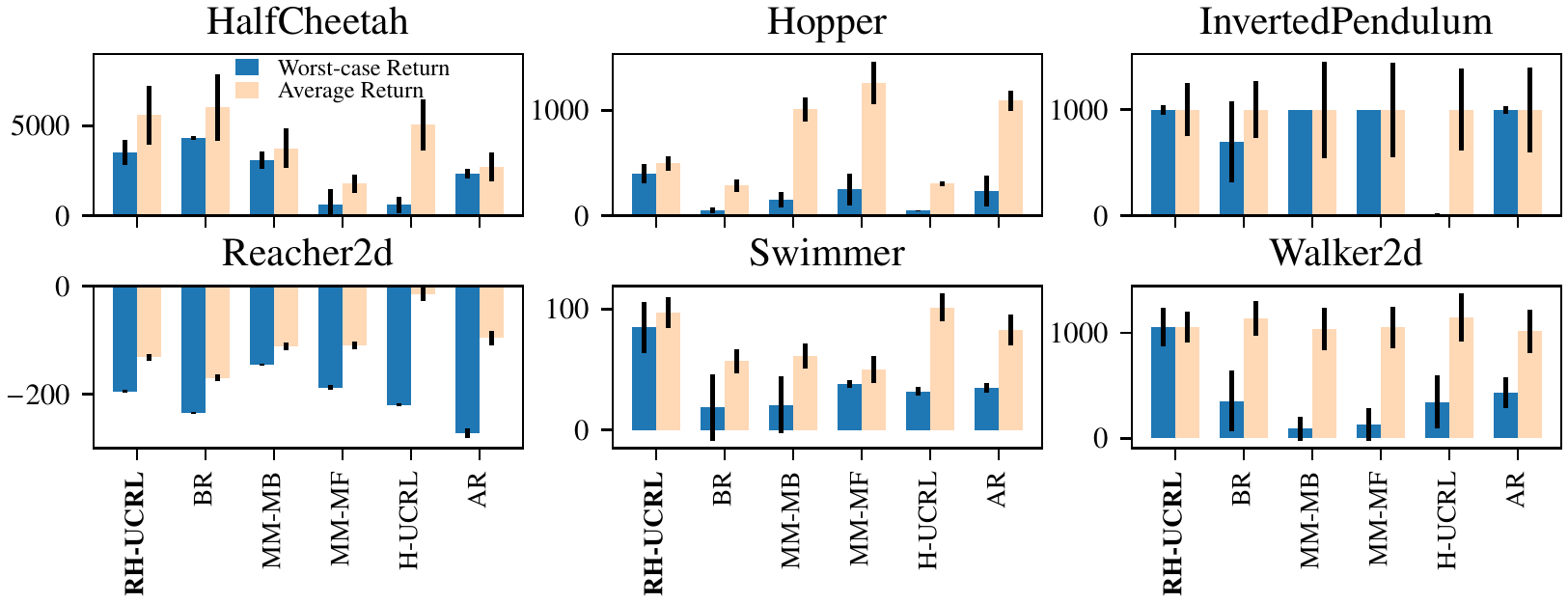}
    \vspace{-1em}
    
    \caption{
    Average and worst-case return of different algorithms in the Noisy Action-Robust Setting in Mujoco tasks. 
    \alg mostly outperforms other algorithms in terms of worst-case return. 
    The non-robust baseline, \hucrl, has good average performance but has an extreme drop in worst-case performance. Overall, the ablations perform better here than in the Adversarial-Robust setting.
  }
  \label{fig:action_robust_final}
\end{figure*}

\textbf{Baselines.} Besides the specific algorithms designed for each setting, we use \hucrl \citep{curi2020efficient} 
as a non-robust baseline and three ablations derived from \alg, namely \texttt{MiniMax}, \texttt{MiniMaxMF} and \texttt{BestResponse}. 
The \texttt{MiniMax} algorithm is:
\begin{equation}
    (\pi_\ti, \piadv_\ti) \in \argmax_{\pi \in \Pi} \min_{\piadv \in \Piadv} J^{(e)}_{\ti}(\pi, \piadv), \label{eq:minimax}
\end{equation}
where $J^{(e)}_{\ti}(\pi, \piadv)$ corresponds to the {\em expected} performance, where the expectation is taken with respect to the aleatoric and epistemic uncertainty, i.e., none of the players actively explore.
Next, the \texttt{MiniMaxMF} algorithm is a model-free implementation of \cref{eq:minimax} that uses SAC \citep{haarnoja2018soft} as the optimizer for each player.
The \texttt{BestResponse} algorithm is:
\begin{subequations}
    \begin{align}
        \pi_\ti &\in \argmax_{\pi \in \Pi} \min_{\piadv \in \Piadv} J_{\ti}^{(o)}(\pi, \piadv), \label{eq:best-response:learner} \\ 
        \piadv_\ti &\in \argmin_{\piadv \in \Piadv} J_{\ti}^{(e)}(\pi_\ti, \piadv) .  \label{eq:best-response:adversary}
    \end{align} \label{eq:best-response}%
\end{subequations}
Thus, the agent is the same as in \alg, whereas the adversary simply plays the best-response to the agent's policy and does not perform exploration with pessimism. 
The goal of \texttt{BestResponse} is to analyze if exploration of the adversary through pessimism is empirically important, of \texttt{MaxiMin-MB} is to analyze if any exploration is empirically important, and of \texttt{MaxiMin-MF} is to analyze if using a model of the dynamics is beneficial. 

\looseness=-1

\subsection{Adversarial-Robust Reinforcement Learning} \label{sseq:adversarial-rl}
This setting is the most general one that we also consider in \Cref{sec:RHUCRL}. The agent and the adversary can have distinct action spaces, which can also be seen as a particular instance of multi-agent RL with two competing agents. 
In the braking system motivating example, this can be used to model an adversarial state-dependent friction coefficient, e.g., icy roads. 
Having good robust performance in this setting implies braking robustly even with changing conditions.\looseness=-1

The deep robust RL algorithms that we compare with are \texttt{RARL} \citep{pinto2017robust} and \texttt{RAP} \citep{vinitsky2020robust}, and we use the adversarial action space proposed by \citet{pinto2017robust}.
We train all algorithms for 200 episodes except for \texttt{RARL} and \texttt{RAP}; since they are on-policy algorithms, we train them for 1000 episodes. 
To evaluate {\em robust performance} (recall \cref{eq:PAC}), we freeze the output policy and train only its adversary by using \texttt{SAC} for 200 episodes.

In \Cref{fig:adversarial_robust_final}, we show the {\em worst-case} and {\em average} returns on the different environments.
In terms of {\em average} performance, there is no algorithm that performs better than others in all of the environments.
On the other hand, comparing {\em worst-case} performance, \rhucrl clearly outperforms the robust ablations, deep robust RL and non-robust baselines. 
For example, in the Inverted Pendulum stabilization task, \alg is the {\em only} algorithm that discovers a robust policy while all other algorithms severely fail.   
\texttt{BestResponse} and \texttt{RAP} manage to learn a policy that stabilizes the pendulum even when they learn with an adversary. 
However, when facing a {\em worst-case} adversary, they fail to complete the task. 
\looseness=-1

Comparing \rhucrl with non-robust \hucrl, we see that in most environments it has comparable or better {\em worst-case} and {\em average} performance. 
This indicates that \rhucrl is not only robust, but using an adversary during training practically helps with exploration.  \citet{pinto2017robust} also report similar findings regarding robust training.
Comparing \rhucrl with the ablations, we see that \rhucrl achieves higher robust performance. From here, we conclude that exploring with both the agent and the adversary during training is crucial to achieve high robust performance in this setting. 
Finally, we see that both \texttt{RARL} and \texttt{RAP} have poor robust performance when trained for 1000 episodes, which demonstrates their sample inefficiency. 

\begin{figure*}[t]
   \centering\includegraphics[width=\textwidth]{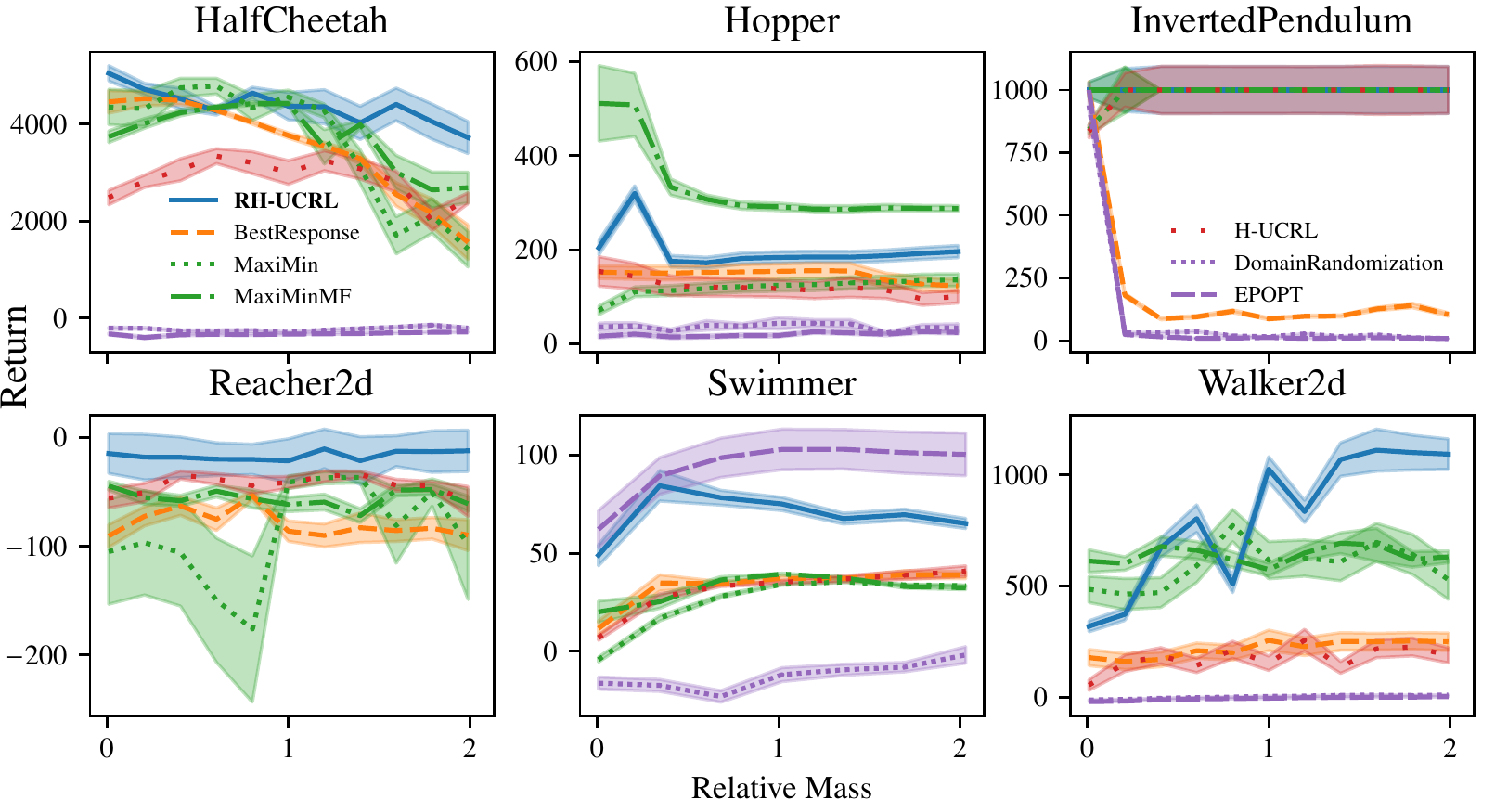}
    
    \caption{
    Returns of different algorithms in the Parameter-Robust Setting in Mujoco tasks for different masses during evaluation. 
    Although \alg optimizes for the worst-case relative mass in this setting, it also performs well over different value of mass parameters. 
  }
  \label{fig:parameter_robust_final}
\end{figure*}

\subsection{Action-Robust Reinforcement Learning} \label{sseq:action-robust-rl}
\citet{tessler2019action} introduce the action-robust setting, where both the agent and the adversary share the action space $\mathcal{\U}$ and jointly execute a single action in the environment. This is useful, e.g., to model robustness to changes in the actuator dynamics, e.g., due to tire wear or incorrect pressure in a braking system.
The action is sampled from a mixture policy $\u^{\text{mix}} \sim \pi^{\text{mix}} = \Gamma_\alpha(\pi, \piadv)$, where $\alpha \in [0,1]$ is a known parameter that controls the mixture proportion.
The system evolves according to $\x_{\hi+1} = f'(\x_\hi, \u^{\text{mix}}_\hi) + \noise_\hi$. 

Besides the previous baselines, we compare to \texttt{AR-DDPG} \cite{tessler2019action}, and show the results of the experiment in \Cref{fig:action_robust_final}. 
Here, \rhucrl is also comparable or better than the baselines in terms of {\em average} and {\em worst-case} returns. However, the ablations perform better than in the adversarial-robust setting. 
This is possibly due to the agent and adversary sharing the action space: The agent injects ``enough'' exploration to successfully learn both policies.\looseness=-1


\subsection{Parameter-Robust Reinforcement Learning} \label{sseq:domain-randomization-rl}

The goal in this setting is to be robust to changes in parameters, such as mass or friction, that can occur between training and test time. 
Being robust to a fixed parameter is equivalent to considering a stateless adversary policy in the \alg algorithm \eqref{eq:rhucrl}, i.e., $\Piadv: \varnothing \to \mathcal{\U}$.
Common benchmarks in this setting are domain randomization \citep{peng2018sim,tobin2017domain} and EP-OPT \cite{rajeswaran2016epopt}.
The former randomizes the parameters in the simulation and uses the {\em average} over these parameters as a surrogate of the maximum. 
The latter also randomizes the parameters but considers the CVaR as a surrogate of the maximum. 
As they are on-policy procedures, we train them using data for 1000 episodes. 
Finally, we evaluate the policies in different environments by varying the corresponding mass parameters.  \looseness=-1

We show the results of this setting in \Cref{fig:parameter_robust_final}. 
Although \alg optimizes for the worst-case parameter, it performs well over different mass parameter values, and, except in the Walker environment, its performance remains robust and nearly constant for different values of the mass parameter. 
\hucrl is trained with nominal mass only (relative mass = 1), and it suffers in performance when varying the mass. 
This is most notable in the Half Cheetah environment (see \cref{fig:parameter_robust_final}). 
The robust variants, instead, can alter the mass during training and often perform better than \hucrl. 
A particular case happens with the \texttt{BestResponse} algorithm in the Inverted Pendulum, where the adversary is greedy and so it swiftly chooses a small mass and never changes it during training. 
The agent learns only for this small mass and, when evaluated with different ones, it performs poorly. 
We also observe that in the Hopper, the \texttt{MaxiMin-MF} outperforms the \texttt{MaxiMin-MB}. 
The reason for this might be due to early stopping of the environment, as it is possible that the transitions collected in 200 episodes are not sufficient for learning the model, but allow for learning a policy in a model-free way. \looseness=-1

\section{Conclusion}
We introduced the \alg algorithm, a practical algorithm for deep model-based robust RL. It uses optimistic and pessimistic estimates of the robust performance to efficiently explore both the agent and fictitious adversary decision spaces during policy learning. We showed that \alg is provably robust and we established sample-complexity and regret guarantees. 
We instantiated our algorithm in important robust-RL settings such as adversarial-robust RL, parameter-robust RL, and action-robust RL. Empirically, \alg outperforms state-of-the-art deep robust RL algorithms. Perhaps surprisingly, its discovered robust policies often attain better non-robust performance than the ones found by non-robust algorithms, indicating benefits of \alg for exploration.  

\section*{Acknowledgments and Disclosure of Funding}
This project has received funding from the European Research Council (ERC) under the European Unions Horizon 2020 research and innovation program grant agreement No 815943. It was also supported by a fellowship from the Open Philanthropy Project.

\bibliographystyle{icml2020}
\bibliography{main}

\begin{thebibliography}{50}
\providecommand{\natexlab}[1]{#1}
\providecommand{\url}[1]{\texttt{#1}}
\expandafter\ifx\csname urlstyle\endcsname\relax
  \providecommand{\doi}[1]{doi: #1}\else
  \providecommand{\doi}{doi: \begingroup \urlstyle{rm}\Url}\fi

\bibitem[Antos et~al.(2008)Antos, Szepesv{\'a}ri, and Munos]{antos2008fitted}
Antos, A., Szepesv{\'a}ri, C., and Munos, R.
\newblock Fitted {Q}-iteration in continuous action-space {MDP}s.
\newblock In \emph{Conference on Neural Information Processing Systems
  (NeurIPS)}, pp.\  9--16, 2008.

\bibitem[Auer et~al.(2009)Auer, Jaksch, and Ortner]{auer2009near}
Auer, P., Jaksch, T., and Ortner, R.
\newblock Near-optimal regret bounds for reinforcement learning.
\newblock In \emph{Conference on Neural Information Processing Systems
  (NeurIPS)}, pp.\  89--96, 2009.

\bibitem[Bai \& Jin(2020)Bai and Jin]{bai2020provable}
Bai, Y. and Jin, C.
\newblock Provable self-play algorithms for competitive reinforcement learning.
\newblock In \emph{International Conference on Machine Learning (ICML)}, pp.\
  551--560, 2020.

\bibitem[Ba{\c{s}}ar \& Bernhard(2008)Ba{\c{s}}ar and Bernhard]{bacsar2008h}
Ba{\c{s}}ar, T. and Bernhard, P.
\newblock \emph{H-infinity optimal control and related minimax design problems:
  a dynamic game approach}.
\newblock Springer Science \& Business Media, 2008.

\bibitem[Bemporad et~al.(2003)Bemporad, Borrelli, and Morari]{bemporad2003min}
Bemporad, A., Borrelli, F., and Morari, M.
\newblock Min-max control of constrained uncertain discrete-time linear
  systems.
\newblock \emph{IEEE Transactions on automatic control}, 48\penalty0
  (9):\penalty0 1600--1606, 2003.

\bibitem[Bogunovic et~al.(2018)Bogunovic, Scarlett, Jegelka, and
  Cevher]{bogunovic2018adversarially}
Bogunovic, I., Scarlett, J., Jegelka, S., and Cevher, V.
\newblock Adversarially robust optimization with {G}aussian processes.
\newblock In \emph{Conference on Neural Information Processing Systems
  (NeurIPS)}, pp.\  5760--5770, 2018.

\bibitem[Camacho \& Alba(2013)Camacho and Alba]{camacho2013model}
Camacho, E.~F. and Alba, C.~B.
\newblock \emph{Model predictive control}.
\newblock Springer science \& business media, 2013.

\bibitem[Chowdhury \& Gopalan(2017)Chowdhury and
  Gopalan]{chowdhury2017kernelized}
Chowdhury, S.~R. and Gopalan, A.
\newblock On kernelized multi-armed bandits.
\newblock In \emph{International Conference on Machine Learning (ICML)}, pp.\
  844--853, 2017.

\bibitem[Chowdhury \& Gopalan(2019)Chowdhury and Gopalan]{chowdhury2019online}
Chowdhury, S.~R. and Gopalan, A.
\newblock Online learning in kernelized {M}arkov decision processes.
\newblock In \emph{Conference on Artificial Intelligence and Statistics
  (AISTATS)}, pp.\  3197--3205, 2019.

\bibitem[Chua et~al.(2018)Chua, Calandra, McAllister, and Levine]{chua2018deep}
Chua, K., Calandra, R., McAllister, R., and Levine, S.
\newblock Deep reinforcement learning in a handful of trials using
  probabilistic dynamics models.
\newblock In \emph{Conference on Neural Information Processing Systems
  (NeurIPS)}, pp.\  4754--4765, 2018.

\bibitem[Cover \& Thomas(1991)Cover and Thomas]{cover1991entropy}
Cover, T.~M. and Thomas, J.~A.
\newblock Entropy, relative entropy and mutual information.
\newblock \emph{Elements of information theory}, 2\penalty0 (1):\penalty0
  12--13, 1991.

\bibitem[Curi et~al.(2020{\natexlab{a}})Curi, Berkenkamp, and
  Krause]{curi2020efficient}
Curi, S., Berkenkamp, F., and Krause, A.
\newblock Efficient model-based reinforcement learning through optimistic
  policy search and planning.
\newblock \emph{Conference on Neural Information Processing Systems (NeurIPS)},
  33, 2020{\natexlab{a}}.

\bibitem[Curi et~al.(2020{\natexlab{b}})Curi, Melchior, Berkenkamp, and
  Krause]{curi2020structured}
Curi, S., Melchior, S., Berkenkamp, F., and Krause, A.
\newblock Structured variational inference in partially observable unstable
  gaussian process state space models.
\newblock In \emph{Learning for Dynamics and Control}, pp.\  147--157,
  2020{\natexlab{b}}.

\bibitem[Deisenroth \& Rasmussen(2011)Deisenroth and
  Rasmussen]{deisenroth2011pilco}
Deisenroth, M. and Rasmussen, C.~E.
\newblock Pilco: A model-based and data-efficient approach to policy search.
\newblock In \emph{International Conference on machine learning (ICML)}, pp.\
  465--472, 2011.

\bibitem[Der~Kiureghian \& Ditlevsen(2009)Der~Kiureghian and
  Ditlevsen]{der2009aleatory}
Der~Kiureghian, A. and Ditlevsen, O.
\newblock Aleatory or epistemic? does it matter?
\newblock \emph{Structural safety}, 31\penalty0 (2):\penalty0 105--112, 2009.

\bibitem[Desautels et~al.(2014)Desautels, Krause, and
  Burdick]{desautels2014parallelizing}
Desautels, T., Krause, A., and Burdick, J.~W.
\newblock Parallelizing exploration-exploitation tradeoffs in {G}aussian
  process bandit optimization.
\newblock \emph{Journal of Machine Learning Research (JMLR)}, 15:\penalty0
  3873--3923, 2014.

\bibitem[Dulac-Arnold et~al.(2019)Dulac-Arnold, Mankowitz, and
  Hester]{dulac2019challenges}
Dulac-Arnold, G., Mankowitz, D., and Hester, T.
\newblock Challenges of real-world reinforcement learning.
\newblock \emph{arXiv preprint arXiv:1904.12901}, 2019.

\bibitem[Durand et~al.(2018)Durand, Maillard, and Pineau]{durand2018streaming}
Durand, A., Maillard, O.-A., and Pineau, J.
\newblock Streaming kernel regression with provably adaptive mean, variance,
  and regularization.
\newblock \emph{The Journal of Machine Learning Research (JMLR)}, 19\penalty0
  (1):\penalty0 650--683, 2018.

\bibitem[Haarnoja et~al.(2018)Haarnoja, Zhou, Abbeel, and
  Levine]{haarnoja2018soft}
Haarnoja, T., Zhou, A., Abbeel, P., and Levine, S.
\newblock Soft actor-critic: Off-policy maximum entropy deep reinforcement
  learning with a stochastic actor.
\newblock In \emph{International Conference on Machine Learning (ICML)}, pp.\
  1861--1870, 2018.

\bibitem[Iyengar(2005)]{iyengar2005robust}
Iyengar, G.~N.
\newblock Robust dynamic programming.
\newblock \emph{Mathematics of Operations Research}, 30\penalty0 (2):\penalty0
  257--280, 2005.

\bibitem[Kamalaruban et~al.(2020)Kamalaruban, Huang, Hsieh, Rolland, Shi, and
  Cevher]{kamalaruban2020robust}
Kamalaruban, P., Huang, Y.-T., Hsieh, Y.-P., Rolland, P., Shi, C., and Cevher,
  V.
\newblock Robust reinforcement learning via adversarial training with
  {L}angevin dynamics.
\newblock \emph{Conference on Neural Information Processing Systems (NeurIPS)},
  33, 2020.

\bibitem[Kamthe \& Deisenroth(2018)Kamthe and Deisenroth]{kamthe2018data}
Kamthe, S. and Deisenroth, M.
\newblock Data-efficient reinforcement learning with probabilistic model
  predictive control.
\newblock In \emph{Conference on Artificial Intelligence and Statistics
  (AISTATS)}, pp.\  1701--1710, 2018.

\bibitem[Kingma \& Ba(2014)Kingma and Ba]{kingma2014adam}
Kingma, D.~P. and Ba, J.
\newblock Adam: A method for stochastic optimization.
\newblock \emph{arXiv preprint arXiv:1412.6980}, 2014.

\bibitem[Kirschner et~al.(2020)Kirschner, Bogunovic, Jegelka, and
  Krause]{kirschner2020distributionally}
Kirschner, J., Bogunovic, I., Jegelka, S., and Krause, A.
\newblock Distributionally robust {B}ayesian optimization.
\newblock In \emph{Conference on Artificial Intelligence and Statistics
  (AISTATS)}, pp.\  2174--2184, 2020.

\bibitem[Krause \& Ong(2011)Krause and Ong]{krause2011contextual}
Krause, A. and Ong, C.~S.
\newblock Contextual {G}aussian process bandit optimization.
\newblock In \emph{Conference on Neural Information Processing Systems
  (NeurIPS)}, pp.\  2447--2455, 2011.

\bibitem[Lagoudakis \& Parr(2002)Lagoudakis and Parr]{lagoudakis2002value}
Lagoudakis, M.~G. and Parr, R.
\newblock Value function approximation in zero-sum {M}arkov games.
\newblock In \emph{Conference on Uncertainty in artificial intelligence (UAI)},
  pp.\  283--292, 2002.

\bibitem[Littman(1994)]{littman1994markov}
Littman, M.~L.
\newblock Markov games as a framework for multi-agent reinforcement learning.
\newblock In \emph{Machine learning proceedings}, pp.\  157--163. Elsevier,
  1994.

\bibitem[Littman \& Szepesv{\'a}ri(1996)Littman and
  Szepesv{\'a}ri]{littman1996generalized}
Littman, M.~L. and Szepesv{\'a}ri, C.
\newblock A generalized reinforcement-learning model: Convergence and
  applications.
\newblock In \emph{International Conference on Machine Learning (ICML)}, pp.\
  310--318, 1996.

\bibitem[Malik et~al.(2019)Malik, Kuleshov, Song, Nemer, Seymour, and
  Ermon]{malik2019calibrated}
Malik, A., Kuleshov, V., Song, J., Nemer, D., Seymour, H., and Ermon, S.
\newblock Calibrated model-based deep reinforcement learning.
\newblock In \emph{International Conference on Machine Learning (ICML)}, pp.\
  4314--4323, 2019.

\bibitem[Mohamed et~al.(2019)Mohamed, Rosca, Figurnov, and
  Mnih]{mohamed2019monte}
Mohamed, S., Rosca, M., Figurnov, M., and Mnih, A.
\newblock Monte {C}arlo gradient estimation in machine learning.
\newblock \emph{arXiv preprint arXiv:1906.10652}, 2019.

\bibitem[Nilim \& El~Ghaoui(2005)Nilim and El~Ghaoui]{nilim2005robust}
Nilim, A. and El~Ghaoui, L.
\newblock Robust control of {M}arkov decision processes with uncertain
  transition matrices.
\newblock \emph{Operations Research}, 53\penalty0 (5):\penalty0 780--798, 2005.

\bibitem[Osband et~al.(2016)Osband, Blundell, Pritzel, and Roy]{osband2016deep}
Osband, I., Blundell, C., Pritzel, A., and Roy, B.~V.
\newblock Deep exploration via bootstrapped {DQN}.
\newblock \emph{Conference on Neural Information Processing Systems (NeurIPS)},
  pp.\  4033--4041, 2016.

\bibitem[Peng et~al.(2018)Peng, Andrychowicz, Zaremba, and Abbeel]{peng2018sim}
Peng, X.~B., Andrychowicz, M., Zaremba, W., and Abbeel, P.
\newblock Sim-to-real transfer of robotic control with dynamics randomization.
\newblock In \emph{IEEE international conference on robotics and automation
  (ICRA)}, pp.\  1--8. IEEE, 2018.

\bibitem[Perolat et~al.(2015)Perolat, Scherrer, Piot, and
  Pietquin]{perolat2015approximate}
Perolat, J., Scherrer, B., Piot, B., and Pietquin, O.
\newblock Approximate dynamic programming for two-player zero-sum {M}arkov
  games.
\newblock In \emph{International Conference on Machine Learning (ICML)}, pp.\
  1321--1329, 2015.

\bibitem[Pinto et~al.(2017)Pinto, Davidson, Sukthankar, and
  Gupta]{pinto2017robust}
Pinto, L., Davidson, J., Sukthankar, R., and Gupta, A.
\newblock Robust adversarial reinforcement learning.
\newblock In \emph{International Conference on Machine Learning (ICML)}, pp.\
  2817--2826, 2017.

\bibitem[Rajeswaran et~al.(2017)Rajeswaran, Ghotra, Ravindran, and
  Levine]{rajeswaran2016epopt}
Rajeswaran, A., Ghotra, S., Ravindran, B., and Levine, S.
\newblock Epopt: Learning robust neural network policies using model ensembles.
\newblock In \emph{International Conference on Learning Representations
  (ICLR)}, 2017.

\bibitem[Schulman et~al.(2015)Schulman, Levine, Abbeel, Jordan, and
  Moritz]{schulman2015trust}
Schulman, J., Levine, S., Abbeel, P., Jordan, M., and Moritz, P.
\newblock Trust region policy optimization.
\newblock In \emph{International conference on machine learning}, pp.\
  1889--1897. PMLR, 2015.

\bibitem[Schulman et~al.(2017)Schulman, Wolski, Dhariwal, Radford, and
  Klimov]{schulman2017proximal}
Schulman, J., Wolski, F., Dhariwal, P., Radford, A., and Klimov, O.
\newblock Proximal policy optimization algorithms.
\newblock \emph{arXiv preprint arXiv:1707.06347}, 2017.

\bibitem[Silver et~al.(2014)Silver, Lever, Heess, Degris, Wierstra, and
  Riedmiller]{silver2014deterministic}
Silver, D., Lever, G., Heess, N., Degris, T., Wierstra, D., and Riedmiller, M.
\newblock Deterministic policy gradient algorithms.
\newblock In \emph{International Conference on Machine Learning (ICML)}, pp.\
  387--395, 2014.

\bibitem[Srinivas et~al.(2010)Srinivas, Krause, Kakade, and
  Seeger]{srinivas2010gaussian}
Srinivas, N., Krause, A., Kakade, S., and Seeger, M.
\newblock Gaussian process optimization in the bandit setting: no regret and
  experimental design.
\newblock In \emph{International Conference on Machine Learning (ICML)}, pp.\
  1015--1022, 2010.

\bibitem[Sutton \& Barto(2018)Sutton and Barto]{sutton2018reinforcement}
Sutton, R.~S. and Barto, A.~G.
\newblock \emph{Reinforcement learning: An introduction}.
\newblock MIT press, 2018.

\bibitem[Tamar et~al.(2014)Tamar, Mannor, and Xu]{tamar2014scaling}
Tamar, A., Mannor, S., and Xu, H.
\newblock Scaling up robust {MDP}s using function approximation.
\newblock In \emph{International Conference on Machine Learning (ICML)}, pp.\
  181--189, 2014.

\bibitem[Tessler et~al.(2019)Tessler, Efroni, and Mannor]{tessler2019action}
Tessler, C., Efroni, Y., and Mannor, S.
\newblock Action robust reinforcement learning and applications in continuous
  control.
\newblock In \emph{International Conference on Machine Learning (ICML)}, pp.\
  6215--6224, 2019.

\bibitem[Tobin et~al.(2017)Tobin, Fong, Ray, Schneider, Zaremba, and
  Abbeel]{tobin2017domain}
Tobin, J., Fong, R., Ray, A., Schneider, J., Zaremba, W., and Abbeel, P.
\newblock Domain randomization for transferring deep neural networks from
  simulation to the real world.
\newblock In \emph{International Conference on Intelligent Robots and Systems
  (IROS)}, pp.\  23--30. IEEE, 2017.

\bibitem[Todorov et~al.(2012)Todorov, Erez, and Tassa]{todorov2012mujoco}
Todorov, E., Erez, T., and Tassa, Y.
\newblock Mujoco: A physics engine for model-based control.
\newblock In \emph{Conference on Intelligent Robots and Systems}, pp.\
  5026--5033. IEEE, 2012.

\bibitem[Valko et~al.(2013)Valko, Korda, Munos, Flaounas, and
  Cristianini]{valko2013finite}
Valko, M., Korda, N., Munos, R., Flaounas, I., and Cristianini, N.
\newblock Finite-time analysis of kernelised contextual bandits.
\newblock In \emph{Uncertainty in Artificial Intelligence (UAI)}, pp.\  654,
  2013.

\bibitem[van Hasselt et~al.(2016)van Hasselt, Guez, Hessel, Mnih, and
  Silver]{van2016learning}
van Hasselt, H., Guez, A., Hessel, M., Mnih, V., and Silver, D.
\newblock Learning values across many orders of magnitude.
\newblock \emph{Conference on Neural Information Processing Systems (NeurIPS)},
  pp.\  4294--4302, 2016.

\bibitem[Vinitsky et~al.(2020)Vinitsky, Du, Parvate, Jang, Abbeel, and
  Bayen]{vinitsky2020robust}
Vinitsky, E., Du, Y., Parvate, K., Jang, K., Abbeel, P., and Bayen, A.
\newblock Robust reinforcement learning using adversarial populations.
\newblock \emph{arXiv preprint arXiv:2008.01825}, 2020.

\bibitem[Wiesemann et~al.(2013)Wiesemann, Kuhn, and
  Rustem]{wiesemann2013robust}
Wiesemann, W., Kuhn, D., and Rustem, B.
\newblock Robust {M}arkov decision processes.
\newblock \emph{Mathematics of Operations Research}, 38\penalty0 (1):\penalty0
  153--183, 2013.

\bibitem[Zhang et~al.(2020)Zhang, Kakade, Basar, and Yang]{zhang2020model}
Zhang, K., Kakade, S., Basar, T., and Yang, L.
\newblock Model-based multi-agent rl in zero-sum markov games with near-optimal
  sample complexity.
\newblock \emph{Conference on Neural Information Processing Systems (NeurIPS)},
  33, 2020.

\end{thebibliography}

\newpage
\onecolumn
\appendix
{\centering
    {\huge \bf Supplementary Material}

    {\Large \bf Combining Pessimism with Optimism for \\ Robust and Efficient Model-based Deep Reinforcement Learning  \\ [2mm] {\normalsize \bf {Submitted to ICML 2021} \par }  
}}

\section{Relevant Definitions and Results}

\begin{lemma}[Adapted from Corollary 1 in \citet{curi2020efficient}]
\label{lemma:one_step_diff}
    Based on \Cref{as:dynamics_f_lipschitz,as:model_predictions_lipschitz}, for every $\x, \x' \in \X$, it holds:
    \begin{equation} 
        \| f(\x, \pi(\x), \piadv(\x)) - \tilde{f}(\x', \pi(\x'), \piadv(\x')) \|_2 \leq L_f \sqrt{1 + L^2_{\pi} + L^2_{\piadv}} \|\x - \x' \|_2.
    \end{equation}
\end{lemma}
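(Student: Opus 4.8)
The plan is to reduce the claim to the Lipschitz continuity of the dynamics on the joint state–action space by absorbing the two policies into a single augmented map, after which a one-line chaining argument finishes the proof. First I would define the closed-loop feature map $\Phi \colon \X \to \X \times \U \times \Uadv$ by $\Phi(\x) = (\x, \pi(\x), \piadv(\x))$, so that the quantity to be bounded is exactly $\|f(\Phi(\x)) - \tilde{f}(\Phi(\x'))\|_2$. Since the right-hand side of the claim contains no term measuring a pointwise gap between $f$ and $\tilde{f}$, I read the statement in its intended sense where $\tilde f$ is governed by the same $L_f$-Lipschitz regularity as $f$ (in particular the case $\tilde f = f$, for which the displayed constant is tight); under this reading it suffices to control $\|\Phi(\x) - \Phi(\x')\|_2$ and apply Lipschitzness of the dynamics once.

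The key step is bounding the augmented map. Writing the product-space norm as the Euclidean norm of the stacked vector, I would expand
\[
    \|\Phi(\x) - \Phi(\x')\|_2^2 = \|\x - \x'\|_2^2 + \|\pi(\x) - \pi(\x')\|_2^2 + \|\piadv(\x) - \piadv(\x')\|_2^2 .
\]
Invoking the Lipschitz continuity of $\pi$ and $\piadv$ from \Cref{as:pi_lipschitz,as:piadv_lipschit} bounds the last two terms by $L_\pi^2 \|\x - \x'\|_2^2$ and $L_{\piadv}^2 \|\x - \x'\|_2^2$ respectively, which yields $\|\Phi(\x) - \Phi(\x')\|_2 \leq \sqrt{1 + L_\pi^2 + L_{\piadv}^2}\,\|\x - \x'\|_2$.

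Finally I would chain this estimate with the $L_f$-Lipschitz continuity of the dynamics from \Cref{as:dynamics_f_lipschitz}, obtaining
\[
    \|f(\Phi(\x)) - \tilde{f}(\Phi(\x'))\|_2 \leq L_f \|\Phi(\x) - \Phi(\x')\|_2 \leq L_f \sqrt{1 + L_\pi^2 + L_{\piadv}^2}\,\|\x - \x'\|_2 ,
\]
which is precisely the claimed inequality.

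I do not expect any serious obstacle, as this is a routine Lipschitz-composition argument; the only point requiring a moment of care is the norm convention on the product space $\X \times \U \times \Uadv$, since it is the stacked-vector Euclidean norm (rather than, e.g., a sum of norms) that produces the factor $\sqrt{1 + L_\pi^2 + L_{\piadv}^2}$ instead of $1 + L_\pi + L_{\piadv}$. The one genuinely substantive modeling point hidden in the statement is that $\tilde f$ inherits the same Lipschitz constant $L_f$ as the true $f$; this is immediate when $\tilde f = f$ and otherwise must be understood as part of the shared regularity of the calibrated model class, consistent with the absence of any $f$-versus-$\tilde f$ discrepancy term on the right-hand side.
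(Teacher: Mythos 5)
Your proof is correct and is essentially the paper's own argument: the paper likewise bounds the stacked difference $(\x-\x',\, \pi(\x)-\pi(\x'),\, \piadv(\x)-\piadv(\x'))$ in the Euclidean norm of the product space and applies $L_f$-Lipschitzness once, merely without naming the augmented map $\Phi$. Your closing caveat is also on point: the paper's first inequality silently applies $L_f$-Lipschitz continuity to the mixed difference involving $\tilde f$, which is only justified under your reading that $\tilde f$ shares the constant $L_f$ — and indeed the lemma is only ever invoked downstream (in the proof of \cref{lemma:deviation_of_the_optimistic_pessimistic_trajectory}, at \cref{ineq:sim_lipshcitz_f}) with $\tilde f = f$.
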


\begin{proof}
    \begin{subequations}
    \begin{align} 
        \| f(\x, \pi(\x), \piadv(\x)) - \tilde{f}(\x', \pi(\x'), \piadv(\x')) \|_2 & 
        \leq L_f \sqrt{ \|\x - \x' \|_2^2 + \|\pi(\x) - \pi(\x') \|_2^2 + \|\piadv(\x') - \piadv(\x)\|_2^2 }
        \label{eq:proof:lipschitz_f}\\ 
        &\leq \sqrt{ \|\x - \x' \|_2^2 + L^2_{\pi}\|\x - \x' \|_2^2 + L^2_{\piadv}\|\x - \x' \|_2^2}
         \label{eq:proof:lipschitz_pi} \\ 
        & = L_f \sqrt{1 + L^2_{\pi} + L^2_{\piadv}} \|\x - \x' \|_2.
    \end{align}
    \end{subequations}
    \cref{eq:proof:lipschitz_f} holds due to Lipschitz continuity of $f$ and \cref{eq:proof:lipschitz_pi} is due to Lipschitz continuity of $\pi$ and $\piadv$, which we assume in \Cref{as:dynamics_f_lipschitz,as:model_predictions_lipschitz}.
\end{proof}

\begin{lemma}[Adapted from Lemma 3 in \citet{curi2020efficient}]
\label{lemma:performance_diff}
    Based on \Cref{as:dynamics_f_lipschitz,as:model_predictions_lipschitz}, it holds:
    \begin{equation}
        |J(f, \pi, \piadv) - J(\tilde{f}, \pi, \piadv)| \leq L_r \sqrt{1 + L^2_{\pi} + L^2_{\piadv} } \sum_{\hi=0}^\Hi \E{\| \x_\hi - \tilde{\x}_\hi \|_2},
    \end{equation}
    where $\tilde{\x}_\hi$ for $\hi=0, \ldots, \Hi$ is the trajectory generated by the dynamics $\tilde{f}$, starting from $\tilde{\x}_0 = \x_0$ with $\omega_\hi = \tilde{\omega}_{\hi}$. 
\end{lemma}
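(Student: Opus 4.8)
The plan is to exploit a noise-coupling argument together with the Lipschitz continuity of the per-step reward, mirroring the structure of \cref{lemma:one_step_diff} but applied to $r$ instead of $f$. Since $J(f,\pi,\piadv)$ and $J(\tilde{f},\pi,\piadv)$ are each defined as an expectation over their respective trajectory distributions, I would first couple the two processes by driving them with a \emph{common} noise sequence, i.e.\ set $\snoise_\hi = \tilde{\snoise}_\hi$ as in the hypothesis. This leaves both marginal trajectory laws unchanged, and hence leaves both $J$ values unchanged, but it lets me express the difference as a single expectation $\E{\sum_{\hi=0}^{\Hi}\big[r(\x_\hi,\pi(\x_\hi),\piadv(\x_\hi)) - r(\tilde{\x}_\hi,\pi(\tilde{\x}_\hi),\piadv(\tilde{\x}_\hi))\big]}$, where $\x_\hi$ evolves under $f$ and $\tilde{\x}_\hi$ under $\tilde{f}$ from the shared initial state $\x_0$.

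Next I would pass the absolute value inside the expectation (by Jensen/monotonicity, $|\E{X}|\le\E{|X|}$) and split the sum via the triangle inequality, bounding the difference by $\sum_{\hi=0}^{\Hi}\E{\big|r(\x_\hi,\pi(\x_\hi),\piadv(\x_\hi)) - r(\tilde{\x}_\hi,\pi(\tilde{\x}_\hi),\piadv(\tilde{\x}_\hi))\big|}$. For each term, I invoke $L_r$-Lipschitz continuity of the reward jointly in its three arguments, giving an upper bound $L_r\sqrt{\|\x_\hi-\tilde{\x}_\hi\|_2^2 + \|\pi(\x_\hi)-\pi(\tilde{\x}_\hi)\|_2^2 + \|\piadv(\x_\hi)-\piadv(\tilde{\x}_\hi)\|_2^2}$. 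Replacing the action differences using the $L_\pi$- and $L_{\piadv}$-Lipschitzness of the policies (from \cref{as:model_predictions_lipschitz}), each per-step term collapses to $L_r\sqrt{1+L_\pi^2+L_{\piadv}^2}\,\|\x_\hi-\tilde{\x}_\hi\|_2$ — exactly the constant appearing in the claim. Summing over $\hi$ and reinserting the expectation yields the stated bound.

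The only genuinely delicate point is the coupling step: one must justify replacing two independent trajectory expectations by a single expectation over a shared noise realization. This is legitimate because $J$ depends only on the marginal law of each trajectory, so any coupling of the two noise sequences preserves both expectations while rendering the integrand pathwise comparable; the synchronous coupling $\snoise_\hi=\tilde{\snoise}_\hi$ is precisely what the hypothesis encodes. Everything else is routine Lipschitz bookkeeping, structurally identical to \cref{lemma:one_step_diff}.
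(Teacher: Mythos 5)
Your proof is correct and follows essentially the same route as the paper's: couple the two trajectories via a shared noise realization (which the paper builds directly into the lemma statement through $\omega_\hi = \tilde{\omega}_\hi$), exchange sum and expectation, and apply the joint $L_r$-Lipschitzness of $r$ together with the $L_\pi$- and $L_{\piadv}$-Lipschitzness of the policies to obtain the constant $L_r\sqrt{1+L_\pi^2+L_{\piadv}^2}$. Your write-up is in fact slightly more careful than the paper's, which compresses the Jensen/triangle-inequality steps and leaves the coupling justification implicit.
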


\begin{proof}
    \begin{subequations}
    \begin{align} 
        |J(f, \pi, \piadv) - J(\tilde{f}, \pi, \piadv)| &=   \left|\E {\sum_{\hi=0}^\Hi r(\x, \u, \uadv)   - \sum_{\hi=0}^\Hi r(\tilde{\x}, \tilde{\u}, \tilde{\uadv})} \right| \label{eq:proof:def_J} \\
        & =  \left|\sum_{\hi=0}^\Hi \E{r(\x, \u, \uadv) - r(\x, \u, \uadv)} \right|  \label{eq:proof:exchange_E_sum} \\ 
        &\leq L_r \sqrt{1 + L^2_{\pi} + L^2_{\piadv} } \sum_{\hi=0}^\Hi \E{\| \x_\hi - \tilde{\x}_\hi \|_2} \label{eq:proof:reward_lipschitz}.
    \end{align}
    \end{subequations}
    \cref{eq:proof:def_J} follows by definition of $J$, \cref{eq:proof:exchange_E_sum} from linearity of expectation, and \cref{eq:proof:reward_lipschitz} from Lipschitzness of the policy and the reward function, which we assume in \Cref{as:dynamics_f_lipschitz}.
    
\end{proof}

The following lemma bounds the deviation between the optimistic/pessimistic and the true trajectory in a single episode.
\begin{lemma}[Adapted from Lemma 4 in \cite{curi2020efficient}]
\label{lemma:deviation_of_the_optimistic_pessimistic_trajectory}
    Under \Cref{as:dynamics_f_lipschitz,as:well_calibrated_model,as:model_predictions_lipschitz}, for all episodes $t \geq 1$, any $\eta \in [-1,1]$, $\hi \in \lbrace 1, \dots, H \rbrace$, $\pi \in \Pi$ and $\piadv \in \Piadv$ it holds: 
    \begin{equation}
        \| \x_{\hi,\ti} - \tilde{\x}_{\hi,\ti}\|_2 \leq 2 \beta_{\ti-1} \Big(1 + (L_f + 2 \beta_{\ti-1} L_{\sigma}) \sqrt{1 + L_{\pi}^2 + L_{\piadv}^2}\Big)^{\hi-1} \sum_{\hi'=0}^{\hi-1} \|\bsigma_{\ti-1}(\x_{\hi',\ti}, \pi_\ti(\x_{\hi',\ti}), \piadv_\ti(\x_{\hi',\ti})) \|_2, \label{eq:trajectory_difference_lemma}
    \end{equation}
    where $\tilde{\x}_{\hi,\ti}$ is generated by any system $\tilde{f} \in \mathcal{M}_\ti \coloneqq \left\{ \tilde{f} \text{ s.t. } | \tilde{f}(\x, \u, \uadv) - \bmu_{\ti-1}(\x, \u, \uadv) | \leq \beta_{\ti} \bsigma_{\ti-1}(\x, \u, \uadv)\right\}$. We refer to $\mathcal{M}_\ti$ as the set of plausible models at the beginning of episode $\ti$.
\end{lemma}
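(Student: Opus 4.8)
The plan is to derive a one-step recursion for the deviation $\Delta_\hi \coloneqq \|\x_{\hi,\ti} - \tilde{\x}_{\hi,\ti}\|_2$ and then unroll it over the horizon. Since both rollouts start from $\x_0$ and, by hypothesis, share the same noise realizations $\noise_\hi = \tilde\noise_\hi$, we have $\Delta_0 = 0$ and the additive noise cancels at every step, leaving $\Delta_{\hi+1} = \|f(z_\hi) - \tilde f(\tilde z_\hi)\|_2$, where $z_\hi$ and $\tilde z_\hi$ abbreviate the true and model state--action tuples $(\x_{\hi,\ti}, \pi_\ti(\x_{\hi,\ti}), \piadv_\ti(\x_{\hi,\ti}))$ and $(\tilde\x_{\hi,\ti}, \pi_\ti(\tilde\x_{\hi,\ti}), \piadv_\ti(\tilde\x_{\hi,\ti}))$. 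The crucial move is to insert and subtract $f$ evaluated at the \emph{model} trajectory point, writing the increment as $[f(z_\hi) - f(\tilde z_\hi)] + [f(\tilde z_\hi) - \tilde f(\tilde z_\hi)]$, so that the triangle inequality splits $\Delta_{\hi+1}$ into a \emph{Lipschitz-drift} term and a \emph{same-point model-mismatch} term. A feature worth noting is that this routing through the true dynamics $f$ means we never need $\tilde f$ itself to be smooth, which is why the statement can quantify over \emph{any} $\tilde f \in \mathcal{M}_\ti$.

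For the drift term $\|f(z_\hi) - f(\tilde z_\hi)\|_2$, I would apply \Cref{lemma:one_step_diff} with $\tilde f = f$, giving $L_f \sqrt{1 + L_\pi^2 + L_{\piadv}^2}\,\Delta_\hi$. For the mismatch term, the calibration guarantee (\Cref{as:well_calibrated_model}) places the true $f$ inside $\mathcal{M}_\ti$, so both $f$ and $\tilde f$ lie within the confidence band centred at $\bmu_{\ti-1}$; subtracting the two element-wise bounds and taking the $\ell_2$ norm yields $2\beta_\ti \|\bsigma_{\ti-1}(\tilde z_\hi)\|_2$.

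The subtle step, which I expect to be the main obstacle, is that this mismatch term involves $\bsigma_{\ti-1}$ evaluated along the \emph{model} trajectory $\tilde z_\hi$, whereas the target bound sums $\|\bsigma_{\ti-1}\|_2$ along the \emph{true} trajectory $z_{\hi'}$. To reconcile these I would add and subtract $\bsigma_{\ti-1}(z_\hi)$ and use the Lipschitz continuity of $\bsigma_{\ti-1}$ together with that of the policies (\Cref{as:model_predictions_lipschitz}), bounding $\|\bsigma_{\ti-1}(\tilde z_\hi)\|_2 \le \|\bsigma_{\ti-1}(z_\hi)\|_2 + L_\sigma \sqrt{1 + L_\pi^2 + L_{\piadv}^2}\,\Delta_\hi$. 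This is exactly the step that inflates the effective expansion constant: collecting terms gives the recursion $\Delta_{\hi+1} \le (L_f + 2\beta_\ti L_\sigma)\sqrt{1 + L_\pi^2 + L_{\piadv}^2}\,\Delta_\hi + 2\beta_\ti \|\bsigma_{\ti-1}(z_\hi)\|_2$, in which the $2\beta_\ti L_\sigma$ contribution accounts for the spatial variation of the uncertainty band.

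Finally, writing $a \coloneqq (L_f + 2\beta_\ti L_\sigma)\sqrt{1 + L_\pi^2 + L_{\piadv}^2}$ and using $\Delta_0 = 0$, unrolling the linear recursion yields $\Delta_\hi \le 2\beta_\ti \sum_{\hi'=0}^{\hi-1} a^{\,\hi-1-\hi'} \|\bsigma_{\ti-1}(z_{\hi'})\|_2$. Bounding each geometric factor crudely by $a^{\,\hi-1-\hi'} \le (1+a)^{\hi-1}$ and pulling it outside the sum recovers the stated bound. I note that the exponent base in the statement is written with $\beta_{\ti-1}$ while the set $\mathcal{M}_\ti$ and \Cref{as:well_calibrated_model} use $\beta_\ti$; I would simply carry whichever index matches the confidence level used to define $\mathcal{M}_\ti$, and since $\{\beta_\ti\}$ is non-decreasing this discrepancy affects only constants in the final bound.
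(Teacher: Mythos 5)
Your proposal is correct and follows essentially the same route as the paper's proof: the same add-and-subtract decomposition through $f$ evaluated at the model-trajectory point, the same $2\beta\|\bsigma_{\ti-1}(\tilde z_\hi)\|_2$ calibration bound, the same add-and-subtract of $\bsigma_{\ti-1}$ at the true-trajectory point via its Lipschitz continuity, and the same unrolled recursion with the crude bound $a^{\hi-1-\hi'} \leq (1+a)^{\hi-1}$. Your closing remark about the $\beta_{\ti-1}$ versus $\beta_\ti$ index is also apt, as the paper's own proof carries the same inconsistency inherited from the definition of $\mathcal{M}_\ti$, and it is immaterial since $\lbrace \beta_\ti \rbrace$ is non-decreasing.
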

\begin{proof}
    To avoid notational clutter, we denote the closed-loop dynamics as $f^{\pi,\piadv}(\x) = f(\x, \pi(\x), \piadv(\x))$ and the closed-loop epistemic uncertainty as $\bsigma^{\pi,\piadv}(\x) = \bsigma(\x, \pi(\x), \piadv(\x))$. Likewise, we use the following Lipschitz constants shorthands $L_{f,\pi} \equiv L_f \sqrt{1 + L_{\pi}^2 + L_{\piadv}^2}$ and $L_{\sigma,\pi} \equiv L_\sigma \sqrt{1 + L_{\pi}^2 + L_{\piadv}^2}$. 
    \begin{subequations}
        
    We first prove by induction that 
    \begin{equation}
        \| \x_{\hi,\ti} - \tilde{\x}_{\hi,\ti}\|_2 \leq 2 \beta_{\ti-1} \sum_{\hi'=0}^{\hi-1}\left(L_{f,\pi} + 2 \beta_{\ti-1} L_{\sigma,\pi} \right)^{\hi-1 - \hi'} \| \bsigma_{\ti-1}^{\pi_{\ti},\piadv_{\ti}}(\x_{\hi',\ti}) \| \label{ineq:sim_inductive_hypothesis}
    \end{equation}
    For $\hi=0$, clearly $\x_{0,\ti} = \tilde{\x}_{0,\ti}$, while the right-hand-side of \cref{ineq:sim_inductive_hypothesis} is always non-negative. 
    We assume that for $\hi$ the inductive hypothesis \eqref{ineq:sim_inductive_hypothesis} holds. 
    For $\hi + 1$ we have:
    \begin{align}
    \| \x_{\hi + 1,\ti} - \tilde{\x}_{\hi + 1,\ti}\|_2 &= \|
    f^{\pi_\ti,\piadv_\ti}(\x_{\hi, \ti}) - \tilde{f}^{\pi_\ti,\piadv_\ti}(\tilde{\x}_{\hi, \ti})\|_2 \label{eq:sim_transition_dynamics} \\
    &= \| f^{\pi_\ti,\piadv_\ti}(\x_{\hi, \ti}) - \tilde{f}^{\pi_\ti,\piadv_\ti}(\tilde{\x}_{\hi, \ti}) + f^{\pi_\ti,\piadv_\ti}(\tilde{\x}_{\hi, \ti}) - f^{\pi_\ti,\piadv_\ti}(\tilde{\x}_{\hi, \ti})  \|_2 \label{eq:sim_add_substract_f} \\
    &\leq \|f^{\pi_\ti,\piadv_\ti}(\x_{\hi, \ti}) - f^{\pi_\ti,\piadv_\ti}(\tilde{\x}_{\hi, \ti}) \|_2 + \|f^{\pi_\ti,\piadv_\ti}(\tilde{\x}_{\hi, \ti}) - \tilde{f}^{\pi_\ti,\piadv_\ti}(\tilde{\x}_{\hi, \ti}) \|_2 \label{ineq:sim_triangular_f} \\
    & \leq L_{f,\pi} \| \x_{\hi,\ti} - \tilde{\x}_{\hi,\ti}\|_ 2 + \|f^{\pi_\ti,\piadv_\ti}(\tilde{\x}_{\hi, \ti}) - \tilde{f}^{\pi_\ti,\piadv_\ti}(\tilde{\x}_{\hi, \ti}) \|_2 \label{ineq:sim_lipshcitz_f} \\
    &\leq L_{f,\pi} \| \x_{\hi,\ti} - \tilde{\x}_{\hi,\ti}\|_ 2 
    + 2 \beta_{\ti-1} \| \bsigma^{\pi_\ti, \piadv_\ti}_{\ti-1}(\tilde{\x}_{\hi,\ti}) \|_2 \label{ineq:plausible_models} 
    \\  
    &= L_{f,\pi} \| \x_{\hi,\ti} - \tilde{\x}_{\hi,\ti}\|_2
    + 2 \beta_{\ti-1} \| \bsigma^{\pi_\ti, \piadv_\ti}_{\ti-1}(\tilde{\x}_{\hi,\ti}) + \bsigma^{\pi_\ti, \piadv_\ti}_{\ti-1}(\x_{\hi,\ti}) - \bsigma^{\pi_\ti, \piadv_\ti}_{\ti-1}(\x_{\hi,\ti})\|_2  \label{eq:sim_add_substract_sigma} \\ 
    & \leq L_{f,\pi} \| \x_{\hi,\ti} - \tilde{\x}_{\hi,\ti}\|_2 
    + 2 \beta_{\ti-1} \left(\| \bsigma^{\pi_\ti, \piadv_\ti}_{\ti-1}(\tilde{\x}_{\hi,\ti}) - \bsigma^{\pi_\ti, \piadv_\ti}_{\ti-1}(\x_{\hi,\ti})\|_2 + \| \bsigma^{\pi_\ti, \piadv_\ti}_{\ti-1}(\x_{\hi,\ti})\|_2  \right) \label{ineq:sim_triangular_sigma}\\ 
    &\leq \left(L_{f,\pi} + 2\beta_{\ti-1}L_{\sigma,\pi}\right) \| \x_{\hi,\ti} - \tilde{\x}_{\hi,\ti}\|_2 + 2 \beta_{\ti-1} \| \bsigma^{\pi_\ti, \piadv_\ti}_{\ti-1}(\x_{\hi,\ti})\|_2 \label{ineq:sim_lipshcitz_sigma} \\
    &\leq 2 \beta_{\ti-1} \sum_{\hi'=0}^{(\hi + 1) -1} \left(L_{f,\pi} + 2 \beta_{\ti-1} L_{\sigma,\pi} \right)^{(\hi + 1) - 1 - \hi'} \| \bsigma_{\ti-1}^{\pi_{\ti},\piadv_{\ti}}(\x_{\hi',\ti}) \|_2 \label{ineq:sim_apply_inductive_hypothesis}
    \end{align}
    \end{subequations}
    Here, \cref{eq:sim_transition_dynamics} holds by applying the transition dynamics $f^{\pi_\ti, \piadv_\ti}$ and $\tilde{f}^{\pi_\ti, \piadv_\ti}$ with the same noise realization $\omega_\hi = \tilde{\omega}_\hi$;  \cref{eq:sim_add_substract_f} holds by adding and subtracting $f^{\pi_\ti, \piadv_\ti}(\tilde{\x}_{\hi,\ti})$; \cref{ineq:sim_triangular_f} follows from the triangular inequality;  \cref{ineq:sim_lipshcitz_f} comes from \cref{lemma:one_step_diff};  \cref{ineq:plausible_models} holds due to both $f$ and $\tilde{f}$ belonging to the set of plausible models $\mathcal{M}_t$ (see \cref{lemma:deviation_of_the_optimistic_pessimistic_trajectory} for its definition);  \cref{eq:sim_add_substract_sigma} holds by adding and substracting $\bsigma^{\pi_\ti, \piadv_\ti}(\x_{\hi,\ti})$; \cref{ineq:sim_triangular_sigma} is by applying the triangular inequality once more; \cref{ineq:sim_lipshcitz_sigma} is due to the Lipschitz continuity of $\bsigma$ as per \cref{as:model_predictions_lipschitz}; and \eqref{ineq:sim_apply_inductive_hypothesis} holds by replacing the inductive hypothesis from \eqref{ineq:sim_inductive_hypothesis}. 
    
    Finally, we notice that $\left(L_{f,\pi} + 2 \beta_{\ti-1} L_{\sigma,\pi} \right)^{\hi - 1 - \hi'} < \left(1 + L_{f,\pi} + 2 \beta_{\ti-1} L_{\sigma,\pi} \right)^{\hi - 1 - \hi'} \leq \left(1 + L_{f,\pi} + 2 \beta_{\ti-1} L_{\sigma,\pi} \right)^{\hi-1}$ and the main result follows by combining this with \cref{ineq:sim_apply_inductive_hypothesis}. 
    
    \end{proof}

\section{Proofs from \Cref{sec:theoretical_analysis}}
\label{sec:proofs_main}
We start the analysis of the performance of \alg, by first bounding its instantaneous robust-regret by the difference between optimistic and pessimistic performance estimates.  
\begin{lemma}
  \label{lem:instantaneous_regret_bound}
  Let $\pi^{\star}$ be the benchmark policy from \cref{eq:objective}, and let $\pi_\ti$ and $\piadv_\ti$ be the policies selected by \alg at time $t$. Under the callibrated model \Cref{as:well_calibrated_model}, the following holds with probability at least $1 - \delta$:   
  \begin{equation}
    \min_{\piadv \in \Piadv} J(f, \pi^\star, \piadv) - \min_{\piadv \in \Piadv} J(f, \pi_\ti, \piadv) \leq J_\ti^{(o)}(\pi_\ti, \piadv_\ti) - J^{(p)}(\pi_\ti, \piadv_\ti).
  \end{equation}
   
\end{lemma}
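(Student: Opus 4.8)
The plan is to prove the claim by a short chain of inequalities built from three ingredients: (i) the calibration assumption, which sandwiches the true performance between the pessimistic and optimistic estimates; (ii) the optimistic max-min selection rule for $\pi_\ti$ in \cref{eq:rhucrl:learner}; and (iii) the pessimistic argmin selection rule for $\piadv_\ti$ in \cref{eq:rhucrl:adversary}. The only tool needed beyond these is the elementary monotonicity fact that if $g(\piadv) \le h(\piadv)$ pointwise over $\Piadv$, then $\min_{\piadv} g(\piadv) \le \min_{\piadv} h(\piadv)$.

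First I would record the consequence of \Cref{as:well_calibrated_model}: the true dynamics $f$ is realized within the plausible set by choosing an appropriate hallucination variable $\eta \in [-1,1]^{\nstate}$ in \cref{eq:optimistic_transition,eq:pessimistic_transition}, so that maximizing/minimizing over $\eta$ yields, with probability at least $1-\delta$ and simultaneously for all $\pi \in \Pi$, $\piadv \in \Piadv$,
\begin{equation}
  J_\ti^{(p)}(\pi, \piadv) \;\le\; J(f, \pi, \piadv) \;\le\; J_\ti^{(o)}(\pi, \piadv). \nonumber
\end{equation}

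Next I would bound the two terms of the regret separately. For the optimal-policy (optimistic) term, apply the upper bound pointwise at $\pi = \pi^\star$ and take the minimum over $\piadv$ using monotonicity, giving $\min_{\piadv} J(f, \pi^\star, \piadv) \le \min_{\piadv} J_\ti^{(o)}(\pi^\star, \piadv)$; then invoke the max-min optimality of $\pi_\ti$ from \cref{eq:rhucrl:learner} to replace $\pi^\star$ by $\pi_\ti$, and finally relax the minimum over $\piadv$ to its value at the specific point $\piadv_\ti$, yielding $\min_{\piadv} J(f, \pi^\star, \piadv) \le J_\ti^{(o)}(\pi_\ti, \piadv_\ti)$. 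For the executed-policy (pessimistic) term, use that $\piadv_\ti$ is the minimizer in \cref{eq:rhucrl:adversary}, so $J_\ti^{(p)}(\pi_\ti, \piadv_\ti) = \min_{\piadv} J_\ti^{(p)}(\pi_\ti, \piadv)$, and then apply the lower bound pointwise together with monotonicity of the minimum to obtain $J_\ti^{(p)}(\pi_\ti, \piadv_\ti) \le \min_{\piadv} J(f, \pi_\ti, \piadv)$. Subtracting the second chain from the first gives exactly the stated bound.

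I do not expect a genuine obstacle here; the argument is essentially bookkeeping. The one point requiring care is keeping the direction of each min/max inequality consistent: the monotonicity fact must be applied so that the optimistic bound propagates upward and the pessimistic bound downward, and one must note that $\piadv_\ti$ being the argmin of the \emph{pessimistic} objective (not the optimistic one) is immaterial for the optimistic step, where we only use that a minimum over $\piadv$ is at most the value attained at the particular $\piadv_\ti$.
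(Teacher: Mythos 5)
Your proposal is correct and follows essentially the same argument as the paper's proof: calibration realizes $f$ within the plausible set so that $J_\ti^{(p)} \le J(f,\cdot,\cdot) \le J_\ti^{(o)}$, the max-min rule in \cref{eq:rhucrl:learner} lets you swap $\pi^\star$ for $\pi_\ti$, the minimum over $\piadv$ is relaxed to the value at $\piadv_\ti$, and the argmin property of \cref{eq:rhucrl:adversary} closes the chain. The paper merely writes these five steps as a single displayed chain of inequalities rather than bounding the two terms separately, which is an immaterial difference of presentation.
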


\begin{proof}
    We refer to the considered quantity $\min_{\piadv \in \Piadv} J(f, \pi^\star, \piadv) - \min_{\piadv \in \Piadv} J(f, \pi_\ti, \piadv)$ as the robust instantaneous regret of the selected policy $\pi_\ti$, and we proceed by providing its upper bound:
    \begin{subequations}
    \begin{align}
    \min_{\piadv \in \Piadv} J(f, \pi^\star, \piadv) - \min_{\piadv \in \Piadv} J(f, \pi_\ti, \piadv) & \leq \min_{\piadv \in \Piadv} J_\ti^{(o)}(\pi^\star, \piadv) - \min_{\piadv \in \Piadv} J(f, \pi_\ti, \piadv)\label{seq:lemma1:robust_regret}\\
    & \leq \min_{\piadv \in \Piadv} J_\ti^{(o)}(\pi_\ti, \piadv) - \min_{\piadv \in \Piadv} J(f, \pi_\ti, \piadv) \label{seq:lemma1:RHUCRL:protagonist}  \\
    & \leq J_\ti^{(o)}(\pi_\ti, \piadv_\ti) - \min_{\piadv \in \Piadv} J(f, \pi_\ti, \piadv)  \label{seq:lemma1:minimum} \\
    & \leq J_\ti^{(o)}(\pi_\ti, \piadv_\ti) - \min_{\piadv \in \Piadv} J^{(p)}(\pi_\ti, \piadv)  \label{seq:lemma1:LCB}  \\
    &= J_\ti^{(o)}(\pi_\ti, \piadv_\ti) - J^{(p)}(\pi_\ti, \piadv_\ti). \label{seq:lemma1:RHUCRL:antagonist} 
    \end{align}
    \end{subequations}
    Here, inequality \eqref{seq:lemma1:robust_regret} holds by definition of the optimistic estimate in \cref{eq:optimistic_performance}; inequality \eqref{seq:lemma1:RHUCRL:protagonist} holds by definition of protagonist policy in the \rhucrl algorithm \eqref{eq:rhucrl:learner}; and inequality \eqref{seq:lemma1:LCB} holds by definition of the pessimistic estimate in \cref{eq:pessimistic_performance}; finally, equality \eqref{seq:lemma1:RHUCRL:antagonist} holds by definition of the antagonist policy in the \rhucrl algorithm \eqref{eq:rhucrl:adversary}.
\end{proof}

\begin{lemma}
  \label{lem:performance_difference_bound}
    Under \Cref{as:dynamics_f_lipschitz,as:well_calibrated_model,as:model_predictions_lipschitz}, let $\pi_\ti$ and $\piadv_\ti$ be the policies selected by \rhucrl at episode $t$. Then, the following holds for the difference between its optimistic and pessimistic performance:
    \begin{equation}
       J_\ti^{(o)}(\pi_\ti, \piadv_\ti) - J^{(p)}(\pi_\ti, \piadv_\ti)
        \leq 4 L_r \beta_T^H C^H \sum_{\hi=0}^{\Hi} \E{ \sum_{\hi'=0}^{\hi-1} 
        \|\bsigma_{\ti-1}(\x_{\hi',\ti}, \pi_\ti(\x_{\hi',\ti}), \piadv_\ti(\x_{\hi',\ti})) \|_2},
    \end{equation}
    where $C= (1 + L_f + L_{\sigma})(1 + L_{\pi}^2 + L_{\piadv}^2)^{1/2}$. 
\end{lemma}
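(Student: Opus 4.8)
The plan is to relate both $\Jo_\ti(\pi_\ti,\piadv_\ti)$ and $\Jp_\ti(\pi_\ti,\piadv_\ti)$ to the true performance $J(f,\pi_\ti,\piadv_\ti)$ and then control the gap through the divergence of the optimistic and pessimistic rollouts from the true trajectory. The first observation is that the optimizing dynamics $\fo$ and $\fp$ attaining the max/min in \cref{eq:optimistic_performance,eq:pessimistic_performance} both lie in the plausible set $\mathcal{M}_\ti$: since the hallucinated controls satisfy $\etao,\etap \in [-1,1]^\nstate$, the reparameterizations in \cref{eq:optimistic_transition,eq:pessimistic_transition} give $|\fo(\x,\u,\uadv) - \bmu_{\ti-1}(\x,\u,\uadv)| \leq \beta_\ti \bsigma_{\ti-1}(\x,\u,\uadv)$ element-wise, and likewise for $\fp$. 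Moreover, under the calibration \cref{as:well_calibrated_model}, the true $f$ also lies in $\mathcal{M}_\ti$ with probability at least $1-\delta$, so on this event all three dynamics are simultaneously plausible.

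First I would split the gap via the triangle inequality around the true performance: $\Jo_\ti(\pi_\ti,\piadv_\ti) - \Jp_\ti(\pi_\ti,\piadv_\ti) = J(\fo,\pi_\ti,\piadv_\ti) - J(\fp,\pi_\ti,\piadv_\ti) \leq |J(\fo,\cdot) - J(f,\cdot)| + |J(f,\cdot) - J(\fp,\cdot)|$, which is the source of one factor of $2$. Each of the two terms is a performance difference between the true dynamics and a plausible model evaluated under the same policies $\pi_\ti,\piadv_\ti$, so \cref{lemma:performance_diff} bounds it by $L_r\sqrt{1+L_\pi^2+L_{\piadv}^2}\sum_{\hi=0}^\Hi \E{\|\x_{\hi,\ti} - \tilde{\x}_{\hi,\ti}\|_2}$, where $\tilde{\x}$ is the rollout of the corresponding plausible model ($\fo$ or $\fp$) from the shared initial state and noise realization.

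Next I would bound each trajectory deviation $\E{\|\x_{\hi,\ti} - \tilde{\x}_{\hi,\ti}\|_2}$ by invoking \cref{lemma:deviation_of_the_optimistic_pessimistic_trajectory}, which applies precisely because $\fo,\fp,f \in \mathcal{M}_\ti$; this yields a factor $2\beta_{\ti-1}$ (the second $2$) times a geometric factor with base $1 + (L_f + 2\beta_{\ti-1}L_\sigma)\sqrt{1+L_\pi^2+L_{\piadv}^2}$ times the cumulative predictive-uncertainty sum. The final and most delicate step is to collapse this per-step expansion base into the clean constant $\beta_T^\Hi C^\Hi$: using $\beta_{\ti-1}\geq 1$ and $\sqrt{1+L_\pi^2+L_{\piadv}^2}\geq 1$ one verifies the base is at most $\beta_{\ti-1}C$, and then monotonicity $\beta_{\ti-1}\leq\beta_\Ti=\beta_T$ together with $C\geq 1$ lets one uniformly bound $\sqrt{1+L_\pi^2+L_{\piadv}^2}\,\beta_{\ti-1}\,(\text{base})^{\hi-1} \leq \beta_T^\Hi C^\Hi$ for every $\hi\leq\Hi$, pulling the constant outside the sum. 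Combining the two contributions produces the factor $4L_r\beta_T^\Hi C^\Hi$ in front of $\sum_{\hi=0}^\Hi \E{\sum_{\hi'=0}^{\hi-1}\|\bsigma_{\ti-1}(\cdots)\|_2}$.

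The main obstacle I anticipate is this final bookkeeping of constants: the deviation lemma carries a state-dependent exponential factor whose base itself depends on $\beta_{\ti-1}$, and it must be folded into $\beta_T^\Hi C^\Hi$ without losing the correct powers of $\beta_T$ and $C$. I also note a minor inconsistency to flag: the $C$ written in this lemma uses $L_\sigma$, whereas the expansion base naturally yields $2L_\sigma$ as in \cref{thm:exploration:regret:general_regret_bound}; I would carry through the $2L_\sigma$ version, which is what the argument actually delivers.
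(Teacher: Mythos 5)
Your proposal is correct and takes essentially the same route as the paper's own proof: a triangle inequality around $J(f,\pi_\ti,\piadv_\ti)$, \cref{lemma:performance_diff} applied to each of the two terms, \cref{lemma:deviation_of_the_optimistic_pessimistic_trajectory} for the trajectory deviations (valid because $\fo,\fp \in \mathcal{M}_\ti$), and the same constant bookkeeping via $\beta_\ti \geq 1$ non-decreasing and $\ti \leq \Ti$. You are also right to flag the $L_\sigma$ versus $2L_\sigma$ discrepancy: the paper's proof itself sets $C = (1 + L_f + 2L_\sigma)(1 + L_\pi^2 + L_{\piadv}^2)^{1/2}$, consistent with \cref{thm:exploration:regret:general_regret_bound}, so the constant in the lemma statement is a typo and your version is the one the argument actually delivers.
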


\begin{proof}
    \begin{subequations}
    \begin{align}
        J_\ti^{(o)}(\pi_\ti, \piadv_\ti) - J^{(p)}(\pi_\ti, \piadv_\ti)
        &\leq \left| J_\ti^{(o)}(\pi_\ti, \piadv_\ti) - J(f, \pi_\ti, \piadv_\ti) \right|  + \left| J_\ti^{(p)}(\pi_\ti, \piadv_\ti) - J(f, \pi_\ti, \piadv_\ti) \right| \label{seq:lemma3:triangular} \\ 
        & \leq L_r \sqrt{1+L^2_\pi + L^2_{\piadv}} \sum_{\hi=0}^\Hi \left( \E{\| \x_{\hi,\ti} - \x_{\hi,\ti}^{(o)} \|_2}+ \E{\| \x_{\hi,\ti} - \x_{\hi,\ti}^{(p)} \|_2} \right) \label{seq:lemma2:lemma3HUCRL}
    \end{align} Here, \cref{seq:lemma3:triangular} holds by the triangle inequality and \cref{seq:lemma2:lemma3HUCRL} follows from Lemma~\ref{lemma:performance_diff}.

    We proceed to upper bound terms $\| \x_{\hi,\ti} - \x_{\hi,\ti}^{(o)}\|_2$ and $\| \x_{\hi,\ti} - \x_{\hi,\ti}^{(p)} \|_2$.  
    From \Cref{lemma:deviation_of_the_optimistic_pessimistic_trajectory}, it follows that both terms can be bounded in the same way as follows:
    \begin{equation}
        \| \x_{\hi,\ti} - \x_{\hi,\ti}^{(o)}\|_2 \leq 2 \beta_{\ti-1} \Big((1 + L_f + 2 \beta_{\ti-1} L_{\sigma}) \sqrt{1 + L_{\pi}^2 + L_{\piadv}^2}\Big)^{\hi-1} \sum_{\hi'=0}^{\hi-1} \|\bsigma_{\ti-1}(\x_{\hi',\ti}, \pi_\ti(\x_{\hi',\ti}), \piadv_\ti(\x_{\hi',\ti})) \|_2,
    \end{equation}
    as $f^{(o)}$ and $f^{(p)}$ belong to the set of plausible models $\mathcal{M}_t$ (from \cref{lemma:deviation_of_the_optimistic_pessimistic_trajectory}). 

    By applying the previous bound twice in \cref{seq:lemma2:lemma3HUCRL}, and by denoting
    $$C\coloneqq (1 + L_f + 2L_{\sigma})(1 + L_{\pi}^2 + L_{\piadv}^2)^{1/2},$$
    we arrive at:
    \begin{equation}
       J_\ti^{(o)}(\pi_\ti, \piadv_\ti) - J^{(p)}(\pi_\ti, \piadv_\ti)
        \leq 4 L_r \beta_T^H C^H \sum_{\hi=0}^{\Hi} \E{ \sum_{\hi'=0}^{\hi-1} 
        \|\bsigma_{\ti-1}(\x_{\hi',\ti}, \pi_\ti(\x_{\hi',\ti}), \piadv_\ti(\x_{\hi',\ti})) \|_2},
    \end{equation}
    where we used $\ti \leq \Ti$ and $1 \leq \beta_\ti$ is non-decreasing in $\ti$. 
    \end{subequations}

\end{proof}

\generalregretbound*
\begin{proof}[Proof of Theorem 1]
    We bound the robust cumulative regret as follows:
    \begin{subequations}
    \begin{align}
        R_{\Ti} &= \sum_{\ti=1}^{\Ti} \underbrace{\min_{\piadv \in \Piadv} J(f, \pi^\star, \piadv) - \min_{\piadv \in \Piadv} J(f, \pi_\ti, \piadv_\ti)}_{\coloneqq r_\ti} \\
        &\leq \sqrt{\Ti \sum_{\ti=1}^\Ti r_\ti^2} \label{seq:thm1:cs} \\
        &\leq \sqrt{\Ti \sum_{\ti=1}^\Ti (4 L_r \beta_T^H C^H)^2 \Bigg( \sum_{\hi=0}^{\Hi} \E{\sum_{\hi'=0}^{\hi-1} \|\bsigma_{\ti-1}(\x_{\hi',\ti}, \pi_\ti(\x_{\hi',\ti}), \piadv_\ti(\x_{\hi',\ti})) \|_2 }\Bigg)^2}  \label{seq:thm1:lemma3and4}\\
        & = 4 L_r \beta_T^H C^H \sqrt{\Ti} \sqrt{ \sum_{\ti=1}^\Ti  \Bigg( \sum_{\hi=0}^{\Hi} \E{\sum_{\hi'=0}^{\hi-1} \|\bsigma_{\ti-1}(\x_{\hi',\ti}, \pi_\ti(\x_{\hi',\ti}), \piadv_\ti(\x_{\hi',\ti})) \|_2 }\Bigg)^2} \label{seq:thm1:monotonicity_of_L_HT} \\
        &\leq 4 L_r \beta_T^H C^H \Hi \sqrt{\Ti} \sqrt{\sum_{\ti=1}^\Ti \Bigg(  \E{ \sum_{\hi'=0}^{\Hi} \|\bsigma_{\ti-1}(\x_{\hi',\ti}, \pi_\ti(\x_{\hi',\ti}), \piadv_\ti(\x_{\hi',\ti})) \|_2  } \Bigg)^2} \\
        &\leq 4 L_r \beta_T^H C^H \Hi \sqrt{\Ti} \sqrt{\sum_{\ti=1}^\Ti   \E{ \Bigg( \sum_{\hi'=0}^{\Hi} \|\bsigma_{\ti-1}(\x_{\hi',\ti}, \pi_\ti(\x_{\hi',\ti}), \piadv_\ti(\x_{\hi',\ti})) \|_2 \Bigg)^2 }}\label{seq:thm1:jensen}\\
        &\leq 4 L_r \beta_T^H C^H \Hi^{3/2} \sqrt{\Ti} \sqrt{\sum_{\ti=1}^\Ti   \E{  \sum_{\hi'=0}^{\Hi} \|\bsigma_{\ti-1}(\x_{\hi',\ti}, \pi_\ti(\x_{\hi',\ti}), \piadv_\ti(\x_{\hi',\ti})) \|_2^2  }} \label{seq:thm1:cs_2} \\
        &\leq 4 L_r \beta_T^H C^H {\Hi}^{3/2} \sqrt{\Ti \Gamma_T},\label{seq:thm1:final}
        %
    \end{align}
    \end{subequations}
    where \cref{seq:thm1:cs} is due to the Cauchy-Schwarz's inequality;  \cref{seq:thm1:lemma3and4} is due to \Cref{lem:instantaneous_regret_bound} and \Cref{lem:performance_difference_bound}. Finally, \cref{seq:thm1:jensen} follows from the Jensen's inequality, \cref{seq:thm1:cs_2} follows from Cauchy-Schwarz's inequality, and \cref{seq:thm1:final} follows from the definition of $\Gamma_{\Ti}$.
    
\end{proof}

\simpleregretbound*
\begin{proof}[Proof of Corollary 1]
    We start the proof by recalling some of the previously obtained results.
    The instantaneous regret $r_\ti(\pi_\ti)$ of a policy $\pi_\ti$ selected at episode $\ti$ in \cref{eq:rhucrl:learner} is given by:
    \begin{equation}
        r_\ti(\pi_\ti) = \min_{\piadv \in \Piadv} J(f, \pi^\star, \piadv) - \min_{\piadv \in \Piadv} J(f, \pi_\ti, \piadv).
    \end{equation}
    From \Cref{lem:instantaneous_regret_bound} and \Cref{lem:performance_difference_bound}, it follows that 
    \begin{equation} \label{sec:corollary1:instant_regret_bound}
        r_\ti(\pi_\ti) \leq 4 L_r \beta_T^H C^H \sum_{\hi=0}^{\Hi} \E{ \sum_{\hi'=0}^{\hi-1} 
        \|\bsigma_{\ti-1}(\x_{\hi',\ti}, \pi_\ti(\x_{\hi',\ti}), \piadv_\ti(\x_{\hi',\ti})) \|_2}.
    \end{equation}
    We also define
    \begin{equation}
        \bar{r}(\pi_\ti) \coloneqq \min_{\piadv \in \Piadv} J(f, \pi^\star, \piadv) - \min_{\piadv \in \Piadv} J^{(p)}(\pi_\ti, \piadv),
    \end{equation}
    and note that $r(\pi_\ti) \leq \bar{r}(\pi_\ti)$ for every $\pi_\ti$, since $\min_{\piadv \in \Piadv} J^{(p)}(\pi_\ti, \piadv) \leq \min_{\piadv \in \Piadv} J(f,\pi_\ti, \piadv)$. Another useful observation is that the same bound obtained in \Cref{sec:corollary1:instant_regret_bound} also holds in case of $\bar{r}(\pi_\ti)$, i.e., 
    \begin{equation} \label{sec:corollary1:regret_bar_upper_bound}
        r(\pi_\ti) \leq \bar{r}(\pi_\ti) \leq 4 L_r \beta_T^H C^H \sum_{\hi=0}^{\Hi} \E{ \sum_{\hi'=0}^{\hi-1} 
        \|\bsigma_{\ti-1}(\x_{\hi',\ti}, \pi_\ti(\x_{\hi',\ti}), \piadv_\ti(\x_{\hi',\ti})) \|_2}.
    \end{equation}
    
    Recall, that the reported policy $\hat{\pi}_\Ti$ from \cref{eq:rhucrl:output} is chosen among the previously selected episodic policies $\lbrace \pi_1, \dots, \pi_\Ti\rbrace$, such that 
    \begin{equation} \label{sec:corollary1:reported_policy}
        \hat{\pi}_\Ti = \argmin_{\pi \in \lbrace \pi_1, \dots, \pi_\Ti\rbrace} \bar{r}(\pi).
    \end{equation}
    It follows that:
    \begin{subequations}
        \begin{align}
            r(\hat{\pi}_\Ti) &\leq \bar{r}(\hat{\pi}_\Ti) \label{sec:corollary1:regret_relation} \\
            &\leq \frac{1}{\Ti} \sum_{\ti=1}^{\Ti} \bar{r}(\pi_\ti) \label{sec:corollary1:min_smaller_than_avg}\\
            &\leq \frac{1}{\Ti} \sum_{\ti=1}^{\Ti} 4 L_r \beta_T^H C^H \sum_{\hi=0}^{\Hi} \E{ \sum_{\hi'=0}^{\hi-1} 
        \|\bsigma_{\ti-1}(\x_{\hi',\ti}, \pi_\ti(\x_{\hi',\ti}), \piadv_\ti(\x_{\hi',\ti})) \|_2} \label{sec:corollary1:instant_regret_bound_applied} \\
            &\leq \frac{1}{\Ti} 4 L_r \beta_T^H C^H H \sum_{\ti=1}^{\Ti} \E{ \sum_{\hi'=0}^{\Hi}
        \|\bsigma_{\ti-1}(\x_{\hi',\ti}, \pi_\ti(\x_{\hi',\ti}), \piadv_\ti(\x_{\hi',\ti})) \|_2} \label{sec:corollary1:upper_bound_h}\\
        &\leq \frac{1}{\Ti} 4 L_r \beta_T^H C^H H \sqrt{T} \sqrt{\sum_{\ti=1}^{\Ti}  \E{ \left(\sum_{\hi'=0}^{\Hi}
        \|\bsigma_{\ti-1}(\x_{\hi',\ti}, \pi_\ti(\x_{\hi',\ti}), \piadv_\ti(\x_{\hi',\ti})) \|_2 \right)^2}  } \label{sec:corollary1:upper_bound_CS} \\
        &\leq \frac{1}{\Ti} 4 L_r \beta_T^H C^H H \sqrt{T} \sqrt{\sum_{\ti=1}^{\Ti} \E{ \sum_{\hi'=0}^{\Hi}
        \|\bsigma_{\ti-1}(\x_{\hi',\ti}, \pi_\ti(\x_{\hi',\ti}), \piadv_\ti(\x_{\hi',\ti})) \|_2^2}  } \label{sec:corollary1:upper_bound_Jensen} \\
            &\leq \frac{4 L_r \beta_T^H C^H\Hi^{3/2} \sqrt{\Ti \Gamma_{\Ti}}}{\Ti}  \label{sec:corollary1:cumulative_regret_bound}
        \end{align}
    \end{subequations}
    where \cref{sec:corollary1:regret_relation} follows from \cref{sec:corollary1:regret_bar_upper_bound}, and \cref{sec:corollary1:min_smaller_than_avg} follows from the policy reporting rule in \cref{sec:corollary1:reported_policy} and by upper bounding minimum with average. Finally, \cref{sec:corollary1:instant_regret_bound_applied} is due to \cref{sec:corollary1:regret_bar_upper_bound}, and \cref{sec:corollary1:upper_bound_h,sec:corollary1:upper_bound_CS,sec:corollary1:upper_bound_Jensen,sec:corollary1:cumulative_regret_bound} follow the same argument as in the proof of  \cref{thm:exploration:regret:general_regret_bound}.  

    To achieve $r(\hat{\pi}_\Ti) \leq \epsilon$ for some given $\epsilon > 0$, we require that
    $$
    \frac{4 L_r \beta_T^H C^H \Hi^{3/2} \sqrt{\Ti \Gamma_{\Ti}}}{\Ti} \leq \epsilon.$$
    By simple inversion it follows that we require the following number of episodes $\Ti$:
    $$
        \frac{\Ti}{\beta_T^{2\Hi}  \Gamma_{\Ti}} \geq \frac{16 L_r^2 \Hi^3 C^{2\Hi}}{\epsilon^2}
    $$
    to achieve $r(\hat{\pi}_\Ti) \leq \epsilon$.
\end{proof}

\newpage

\section{Gaussian Process Dynamical Models}
\label{section:gp_models}

In this section, we formalize the setting in which the true dynamics $f$ in \cref{eq:stochastic_dynamic_system_additive} has bounded norm in an RKHS induced by a continuous, symmetric positive definite kernel function $k: \mathcal{Z} \times \mathcal{Z} \to \mathbb{R}$, with $\mathcal{Z} = \X \times \U \times \Uadv$. We denote by $\mathcal{K}$ the corresponding RKHS.
Having a norm $\|f\|_{\mathcal{K}} \leq B_f$ for some finite $B_f > 0$ means that the RKHS is well-suited for capturing $f$ \citep{durand2018streaming}. 

Due to the episodic nature of the problem, we follow the batch analysis from \citet{desautels2014parallelizing} and generalize it to the MDP setting with multiple outputs. 
In particular, we observe $\Hi$ transitions per episode and at the beginning of each episode we use the model to make decisions for other $\Hi$ steps. 
To extend to multiple outputs we build $\nstate$ copies of the dataset such that $\dataset_{1:\ti, i} = \left\{ (\x_{\ti', \hi}, \u_{\ti', \hi}, \uadv_{\ti', \hi}), \x_{\ti', \hi + 1, i} \right\}_{\hi=0, \ti'=1}^{\Hi-1, \ti}$, each with $\ti\Hi$ transitions. I.e., the $i$-th dataset has as covariates the state-action-adversarial action and as target the $i$-th coordinate of the next-state. We denote the covariates $z_{\ti,\hi} \equiv (\x_{\ti, \hi}, \u_{\ti, \hi}, \uadv_{\ti, \hi})$ and the targets as $y_{\ti,\hi,i} \equiv \x_{\ti', \hi + 1, i}$.  
Finally, we build $\nstate$ models as 
\begin{subequations} \label{eq:GP-Regression}
\begin{align}
    \mu_\ti(z,i) &= k_\ti(z)^\top (K_\ti + \lambda I)^{-1} y_{1:\Hi\ti,i}, \label{eq:posterior_mean}\\
    k_{\ti}(z, z', i) &= k(z, z') -  k_\ti(z)^\top (K_\ti + \lambda I)^{-1}k_\ti(z'), \label{eq:posterior_covariance} \\ 
    \sigma_{\ti}^2(z, i) &= k_\ti(z, z), \label{eq:posterior_variance}
\end{align}
where $\x'_{1:\Hi\ti,i}$ is the column vector of the $i$-th coordinate of all the next-states in the dataset, $K_{\ti}$ is the kernel matrix, $I$ is the identity matrix of appropriate dimensions and we use $\lambda = \nstate \Hi$ as the same data is used in all the $\nstate$ models. 

Stacking together the posterior mean and variance into column vectors we get:
\begin{align}
    \bmu_{\ti}(z) &= \left[\mu_\ti(z,1), \ldots, \mu_{\ti}(z, \nstate) \right]^\top \label{eq:posterior_mean_vector},\\
    \bsigma_{\ti}(z) &= \left[\sigma_{\ti}^2(z, 1), \ldots, \sigma_{\ti}^2(z, \nstate) \right]^\top. \label{eq:posterior_variance_vector}
\end{align}
\end{subequations}

A key quantity that we consider in this work is the \emph{(maximum) information gain} that measures the information about the true dynamics $f$ by observing $n$ transitions. 

\begin{definition}[Information Gain \citep{cover1991entropy,srinivas2010gaussian,durand2018streaming}]
    The information gain is the mutual information between the true function $f$ and a set of observations at locations $Z$ and is the difference between the entropy of such observations and the conditional entropy of the observations given the funciton values i.e., 
    \begin{subequations}
    
    \begin{equation}
        I(f_Z; y_{Z}) = H(y_{Z}) - H(y_{Z} | f_Z), \label{eq:info_gain}
    \end{equation}
    where $f_Z$ is the noise-free evaluation of $f$ at $Z$ and $y_Z$ is the noisy observation. In the case of GP models as in \cref{eq:GP-Regression}, the information gain is:
    \begin{equation}
        I(f_Z; y_Z) = \frac{1}{2} \sum_{k=1}^n \ln(1 + \lambda^{-1} \sigma^2_{k-1}(z_k)). \label{eq:info_gain_gp}
    \end{equation}
    \end{subequations}

\end{definition}

Next, we introduce the maximum information gain, which is a parameter that quantifies how hard the learning problem is and tightly upper bounds the {\em effective-dimensionality} of the problem \citep{valko2013finite}.

\begin{definition}[Maximum Information Gain \citep{srinivas2010gaussian}]
    The maximum information gain is the maximum of the information gain, taken over all datasets with a fixed size $n$, i.e., 
    \begin{subequations}
    \begin{equation}
        \gamma_{n}(k; Z) \coloneqq \max_{Z \subset \mathcal{Z}, |Z| = n  } I(f_z; y_{z}).
    \end{equation}
    In the particular case of GP models, this reduces to:
    \begin{equation}
        \gamma_{n}(f; Z) = \max_{ \left\{z_1, \ldots, z_n\right\} \subset \mathcal{Z}} \frac{1}{2} \sum_{k=1}^n \ln(1 + \lambda^{-1} \sigma^2_{k-1}(z_k)).
    \end{equation}
    \end{subequations}
\end{definition}

\citep{srinivas2010gaussian} show that the Maximum Information Gain (MIG) is sub-linear in the number of observations for commonly used kernels. 
The main idea now is to bound the complexity measure $\Gamma_\Ti$ defined in \Cref{eq:complexity_measure} in terms of the MIG and, for commonly used kernels, we achieve no-regret algorithms. Towards this end, we recall two results related to GP-models in \Cref{lemma:posterior_variance}.

\begin{lemma}{Posterior variance bound \citep{chowdhury2019online}} \label{lemma:posterior_variance}
Let $k: \mathcal{Z} \times \mathcal{Z} \to \mathbb{R}$ be a symmetric positive semi-definite kernel with bounded variance, i.e., $k(z, z) \leq 1, \forall z \in \mathcal{Z}$ and $f \sim GP_{\mathcal{Z}}(0, k)$ be a sample from the associated Gaussian process, then for all $n \geq 1$ and $z \in \mathcal{Z}$: 
\begin{subequations}
\begin{align}
        \sigma_{n-1}^2(z) &\leq (1 + \lambda^{-1}) \sigma_{n}^2(z), \label{eq:posterior_variance_bound} \\ 
        \sum_{k=1}^n \sigma_{k-1}^2(z_k) &\leq (1 + 2\lambda) \sum_{k=1}^n \frac{1}{2} \ln \left[ 1 + \lambda^{-1}\sigma_{k-1}(z_k) \right] = (1 + 2\lambda) I(f_Z;y_Z). \label{eq:sum_posterior_variance_bound}
\end{align}
\end{subequations}
\end{lemma}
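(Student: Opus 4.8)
The plan is to prove the two inequalities separately, since they rely on different tools. \cref{eq:posterior_variance_bound} is a purely algebraic statement about how much a single additional observation can shrink the posterior variance, whereas \cref{eq:sum_posterior_variance_bound} converts an accumulated sum of posterior variances into the information gain via an elementary scalar inequality. Throughout I would lean on the two structural facts supplied by the hypotheses: the posterior covariance $k_{n-1}(\cdot,\cdot)$ from \cref{eq:posterior_covariance} is itself a positive semi-definite kernel, and $\sigma_{n-1}^2(z)\le k(z,z)\le 1$ for every $z$ (conditioning never increases variance, and the prior variance is bounded by one).

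For \cref{eq:posterior_variance_bound}, first I would write the rank-one update obtained by conditioning on the single extra point $z_n$,
\[
\sigma_n^2(z)=\sigma_{n-1}^2(z)-\frac{k_{n-1}(z,z_n)^2}{\lambda+\sigma_{n-1}^2(z_n)}.
\]
Next I would invoke Cauchy--Schwarz for the PSD posterior kernel, $k_{n-1}(z,z_n)^2\le \sigma_{n-1}^2(z)\,\sigma_{n-1}^2(z_n)$, to lower bound the subtracted term, giving
\[
\sigma_n^2(z)\ge \sigma_{n-1}^2(z)\,\frac{\lambda}{\lambda+\sigma_{n-1}^2(z_n)}.
\]
Finally I would use bounded variance $\sigma_{n-1}^2(z_n)\le 1$ to replace the denominator by $\lambda+1$ and rearrange into $\sigma_{n-1}^2(z)\le(1+\lambda^{-1})\sigma_n^2(z)$.

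For \cref{eq:sum_posterior_variance_bound}, the idea is to bound each summand by the matching logarithmic term. Writing $v_k=\sigma_{k-1}^2(z_k)\in[0,1]$, I would establish the scalar inequality $v_k\le (1+2\lambda)\tfrac12\ln(1+\lambda^{-1}v_k)$. This follows because $x\mapsto\ln(1+x)/x$ is decreasing, so on $[0,\lambda^{-1}]$ one has $\ln(1+\lambda^{-1}v_k)\ge \lambda^{-1}v_k\cdot\lambda\ln(1+\lambda^{-1})=v_k\ln(1+\lambda^{-1})$; it then remains to check $(1+2\lambda)\ln(1+\lambda^{-1})\ge 2$, which I would verify by the substitution $t=\lambda^{-1}$ and a short convexity argument showing $(t+2)\ln(1+t)\ge 2t$ for all $t>0$. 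Summing over $k$ and identifying the right-hand side with $I(f_Z;y_Z)$ via \cref{eq:info_gain_gp} then yields the claim.

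The main obstacle is pinning down the constant in the second part: the bounded-variance assumption is what confines the argument $\lambda^{-1}v_k$ to $[0,\lambda^{-1}]$, which is precisely the range on which the linear-in-$v_k$ lower bound for $\ln(1+\lambda^{-1}v_k)$ is tight enough to produce the factor $(1+2\lambda)$; without this restriction the termwise inequality would fail for large variances. The first part is comparatively routine once the rank-one update and the PSD Cauchy--Schwarz step are in place.
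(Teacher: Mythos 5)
Your proof is correct. Note that the paper does not prove this lemma itself---its ``proof'' is just the pointer ``See \citep[Lemma 2]{chowdhury2019online}''---so your write-up supplies the argument the paper omits, and it is exactly the standard proof of that cited result: the Schur-complement rank-one update $\sigma_n^2(z)=\sigma_{n-1}^2(z)-k_{n-1}(z,z_n)^2/(\lambda+\sigma_{n-1}^2(z_n))$, Cauchy--Schwarz for the PSD posterior kernel, and $\sigma_{n-1}^2(z_n)\le k(z_n,z_n)\le 1$ yield the first claim, while your termwise scalar bound $v\le(1+2\lambda)\tfrac{1}{2}\ln(1+\lambda^{-1}v)$ on $[0,1]$ is valid because $(1+2\lambda)\ln(1+\lambda^{-1})\ge 2$ for all $\lambda>0$ (your substitution $t=\lambda^{-1}$ reduces this to $(t+2)\ln(1+t)\ge 2t$, which follows from $\ln(1+t)\ge t/(1+t)$). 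One reconciliation worth flagging: as printed, the lemma's second display has $\sigma_{k-1}(z_k)$ inside the logarithm, whereas the information-gain identity in \cref{eq:info_gain_gp} has $\sigma_{k-1}^2(z_k)$; you correctly work with the squared posterior variance throughout, so the discrepancy is a typo in the paper's statement rather than a gap in your argument.
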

\begin{proof}
    See \citep[Lemma 2]{chowdhury2019online}
\end{proof}

Although the left-hand-side in \cref{eq:sum_posterior_variance_bound} has the flavor of the complexity measure $\Gamma_\Ti$ defined in \Cref{eq:complexity_measure} it is not exactly the same as we only update the posterior once every $\Hi\nstate$ observations. 
This is related to the batch setting analyzed in \citet{desautels2014parallelizing}. 
The next lemma bounds the sum of posterior variances in terms of the information gain.

\begin{lemma}{Complexity measure $\Gamma_\Ti$ is upper bounded by MIG}
Let $k: \mathcal{Z} \times \mathcal{Z} \to \mathbb{R}$ be a symmetric positive semi-definite kernel with bounded variance, i.e., $k(z, z) \leq 1, \forall z \in \mathcal{Z}$ and $f \sim GP_{\mathcal{Z}}(0, k)$ be a sample from the associated Gaussian process, then for all $\ti \geq 1$ and $z \in \mathcal{Z}$ for the GP-model given in \cref{eq:GP-Regression} with $\lambda=\Hi\nstate$ we have that:
\begin{equation}
    \Gamma_\ti \leq  2e\nstate\Hi \gamma_{\nstate \Hi \ti}(k, \mathcal{Z}) \label{eq:complexity_measure_bounded_MIG}
\end{equation}

\end{lemma}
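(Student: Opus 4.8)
The plan is to reduce the batched, multi-output complexity measure $\Gamma_\ti$ to an ordinary, fully-sequential sum of posterior variances and then invoke \Cref{lemma:posterior_variance} to pass to the maximum information gain. The first observation is that the GP posterior variance in \cref{eq:posterior_variance} depends only on the covariates $z=(\x,\u,\uadv)$ and the kernel, not on the targets; since all $\nstate$ output models share the same covariates, $\sigma_\ti^2(z,i)$ is identical across coordinates $i$, and I will write it as $\sigma_\ti^2(z)$. Interpreting $\bsigma$ in \cref{eq:posterior_variance_vector} as the vector of posterior standard deviations (consistent with \Cref{as:well_calibrated_model}), this gives $\|\bsigma_{\ti-1}(z)\|_2^2=\sum_{i=1}^{\nstate}\sigma_{\ti-1}^2(z)=\nstate\,\sigma_{\ti-1}^2(z)$, so from \cref{eq:complexity_measure} it suffices to bound $\max\sum_{t'=1}^{\ti}\sum_{z\in\tilde{\dataset}_{t'}}\sigma_{t'-1}^2(z)$ and multiply by $\nstate$. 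Equivalently, I view the $\nstate$ copies of the dataset as one stacked GP with regularizer $\lambda=\nstate\Hi$ and $\nstate\Hi\ti$ scalar observations; this is precisely the object whose information gain the target quantity $\gamma_{\nstate\Hi\ti}(k,\mathcal{Z})$ controls.

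The hard part, and the main obstacle, is that $\Gamma_\ti$ evaluates the posterior \emph{at the start of each episode}: within episode $t'$, all $\Hi$ transitions are scored with the stale variance $\sigma_{t'-1}$, whereas \Cref{lemma:posterior_variance} is phrased for the genuinely sequential posterior that is refreshed after every observation. To bridge this gap I would iterate the one-step inflation bound \cref{eq:posterior_variance_bound}, $\sigma_{n-1}^2(z)\le(1+\lambda^{-1})\sigma_n^2(z)$, over the observations that the sequential model has already absorbed but the batch model has not. Since a full episode contributes at most $\lambda=\nstate\Hi$ stacked observations, the number of such ``missing'' observations for any point in episode $t'$ is strictly less than $\lambda$, so the accumulated inflation factor is at most $(1+\lambda^{-1})^{\lambda}\le e$. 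This yields $\sigma_{t'-1}^2(z_{t',\hi})\le e\,\tilde{\sigma}^2(z_{t',\hi})$, where $\tilde{\sigma}$ denotes the fully-sequential posterior standard deviation of the stacked model, and this is exactly the step that produces the constant $e$ in the statement; getting the counting right (so that the stale count is below $\lambda$ and the choice $\lambda=\nstate\Hi$ closes the loop) is the delicate point.

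With the batch-to-sequential conversion in hand, I would order all $\nstate\Hi\ti$ stacked observations sequentially and apply \cref{eq:sum_posterior_variance_bound}, obtaining $\sum\tilde{\sigma}^2\le(1+2\lambda)\,I(f_Z;y_Z)$. The information gain of this particular set of $\nstate\Hi\ti$ observations is at most $\gamma_{\nstate\Hi\ti}(k,\mathcal{Z})$ by definition of the maximum information gain. Crucially, this also dispatches the outer maximization over $\tilde{\dataset}_{1:\ti}$ in \cref{eq:complexity_measure}: every admissible trajectory yields a set of the same size $\nstate\Hi\ti$, hence the same MIG upper bound, so the bound holds uniformly over the worst-case dataset. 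Combining the three ingredients gives $\Gamma_\ti=\nstate\sum\sigma_{t'-1}^2\le \nstate\,e\,(1+2\lambda)\,\gamma_{\nstate\Hi\ti}(k,\mathcal{Z})=\nstate\,e\,(1+2\nstate\Hi)\,\gamma_{\nstate\Hi\ti}(k,\mathcal{Z})$, and absorbing the $1+2\nstate\Hi$ factor into $2\nstate\Hi$ recovers the claimed $\Gamma_\ti\le 2e\,\nstate\Hi\,\gamma_{\nstate\Hi\ti}(k,\mathcal{Z})$. I expect the routine parts (the output-independence reduction and the final MIG substitution) to be straightforward, with essentially all the care concentrated in the batch-staleness argument of the second paragraph.
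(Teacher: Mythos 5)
Your overall strategy is exactly the paper's: reduce the batched, multi-output quantity to the stacked scalar sequence, bridge the within-episode staleness via the inflation bound \cref{eq:posterior_variance_bound}, pass to the information gain via \cref{eq:sum_posterior_variance_bound}, and dispatch the outer maximum because every admissible dataset has the same size $\nstate\Hi\ti$. Your identification of the delicate point (at most $\lambda-1 = \nstate\Hi-1$ missing stacked observations, inflation controlled by $(1+\lambda^{-1})^{\lambda}\leq e$ with $\lambda=\nstate\Hi$) is also correct. However, there is a genuine gap in your final combination. After writing $\Gamma_\ti = \nstate\sum \sigma^2_{t'-1}$ (a sum over the $\Hi\ti$ covariates) and bounding the stacked sequential sum by $(1+2\lambda)\,\gamma_{\nstate\Hi\ti}(k,\mathcal{Z})$, you multiply that bound by the explicit factor $\nstate$ \emph{as well}. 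This double-counts the output dimension: the stacked sequence of $\nstate\Hi\ti$ observations already contains each covariate $\nstate$ times, so the $\nstate$-fold multiplicity must be consumed \emph{inside} the stacked sum, not multiplied against it. Your chain therefore yields $\Gamma_\ti \leq \nstate\, e\,(1+2\nstate\Hi)\,\gamma_{\nstate\Hi\ti}(k,\mathcal{Z})$, and the closing ``absorb $1+2\nstate\Hi$ into $2\nstate\Hi$'' step is simply false: $\nstate(1+2\nstate\Hi) > 2\nstate\Hi$ for every $\nstate\geq 1$, so as written you only prove a bound weaker than the lemma by a factor of order $\nstate$, which would propagate into all downstream regret bounds.

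The paper's proof avoids both defects through its bookkeeping. It keeps the triple sum $\sum_{\ti'}\sum_{\hi}\sum_{i}\sigma^2_{(\ti'-1)\Hi\nstate}(z_{\ti',\hi,i})$ and inflates \emph{each of the $\nstate$ coordinate copies} of a covariate to its own distinct position $(\ti'-1)\Hi\nstate + \hi\nstate + i$ in the stacked sequence, with the copy-dependent exponent $\nstate\hi+i-1$; summing over all $\nstate\Hi\ti$ positions then matches \cref{eq:sum_posterior_variance_bound} exactly, with no extra $\nstate$ left over. Second, because the worst exponent is $\nstate\Hi - 1 = \lambda - 1$ rather than $\lambda$, one factor $(1+\lambda^{-1})^{-1}$ stays in reserve, and the identity $(1+\lambda^{-1})^{\lambda-1}(2\lambda+1) = (1+\lambda^{-1})^{\lambda}\,(1+\lambda^{-1})^{-1}(2\lambda+1) \leq e\cdot 2\lambda = 2e\nstate\Hi$ is precisely what produces the stated constant; rounding the inflation up to $e$ first, as you do, already forfeits the absorption of $2\lambda+1$ into $2\lambda$. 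If you route each coordinate copy to its own stacked position and retain the exact exponent $\lambda-1$, your argument closes and coincides with the paper's.
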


\begin{proof}
The proof is based on \citet[Lemma 11]{chowdhury2019online} and adapted to our setting. 
\begin{subequations}
\begin{align}
    \sum_{\ti=1}^{\Ti} \sum_{(\x,\u, \uadv) \in \tilde{\dataset}_{t} } \|\bsigma_{\ti-1}(\x, \u, \uadv)\|^2_2 &=
    \sum_{\ti'=1}^{\ti} \sum_{\hi=0}^{\Hi-1} \sum_{i=1}^{\nstate} \sigma^2_{(\ti'-1)\Hi\nstate} (z_{\ti', \hi, i}) \label{eq:2-norm} \\
    &\leq \sum_{\ti'=1}^{\ti} \sum_{\hi=0}^{\Hi-1} \sum_{i=1}^{\nstate} (1 + \lambda^{-1})^{\nstate \hi + i - 1 }  \sigma^2_{(\ti'-1)\Hi\nstate + \hi\nstate + i} (z_{\ti', \hi, i}) \label{eq:apply_posterior_variance_bound} \\
    &\leq (1 + \lambda^{-1})^{\nstate (\Hi-1) + \nstate - 1 }  \sum_{\ti'=1}^{\ti} \sum_{\hi=0}^{\Hi-1} \sum_{i=1}^{\nstate} \sigma^2_{(\ti'-1)\Hi\nstate + \hi\nstate + i} (z_{\ti', \hi, i})  \label{eq:increasing_terms} \\
    & \leq (1 + \lambda^{-1})^{\nstate \Hi - 1 } (2 \lambda + 1) I(f_Z;y_Z) \label{eq:apply_sum_posterior_variance_bound} \\
    &\leq  2e\nstate\Hi I(f_Z;y_Z) \label{eq:tricks}
\end{align}

Here, equality \eqref{eq:2-norm} is the definition of the 2-norm; inequality \eqref{eq:apply_posterior_variance_bound} is due to \cref{eq:posterior_variance_bound} in \Cref{lemma:posterior_variance}; inequality \eqref{eq:increasing_terms} is due to $1 + \lambda^{-1} \geq 1$; inequality \eqref{eq:apply_sum_posterior_variance_bound} is due to \cref{eq:sum_posterior_variance_bound} in \Cref{lemma:posterior_variance}; finally the last inequality \eqref{eq:tricks} is due to $(1 + \lambda^{-1})^\lambda \leq e$ and $(1 + \lambda^{-1})^{-1} (2\lambda + 1) \leq 2\lambda$. 
The statement follows by taking the maximum over data sets. 
\end{subequations}
\end{proof}

Next, we will show that GP models are calibrated and satisfy \Cref{as:well_calibrated_model}. 

\begin{lemma}{Concentration of an RKHS member \citep[Theorem 1]{durand2018streaming}} \label{lemma:concentration}
Given \Cref{as:dynamics_f_lipschitz}, $\|f\|_{\mathcal{K}} \leq B_f$, and $k(\cdot, \cdot) \leq 1$, then, for all $\delta \in [0, 1]$, which probability at least $1-\delta$, it holds simultaneously over all $z \in Z$ and $t \geq 0$, 

\begin{equation}
    | f(z) - \mu_{\ti}(z) | \leq \left(B_f + \frac{\sigma}{\lambda}\sqrt{2 \ln(1 / \delta) + 2 \gamma_{\ti}}\right) \sigma_t(z),
\end{equation}
where $\mu_{\ti}(z)$ and $\sigma_t(z)$ are given by \cref{eq:posterior_mean} and \cref{eq:posterior_variance}.

\end{lemma}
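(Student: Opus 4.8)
The plan is to prove this as a self-normalized concentration bound in the RKHS, following the method-of-mixtures argument of Abbasi-Yadkori et al.\ and its kernelized extensions. First I would pass to a feature-space representation: let $\phi$ denote the feature map with $k(z,z') = \langle \phi(z), \phi(z')\rangle_{\mathcal{K}}$, so that $f(z) = \langle f, \phi(z)\rangle_{\mathcal{K}}$, collect the observed feature vectors into an operator $\Phi_\ti$ (one row per observation), and define the regularized covariance operator $V_\ti = \Phi_\ti^\top \Phi_\ti + \lambda I$. Standard kernel-trick identities then show that the posterior mean in \cref{eq:posterior_mean} can be rewritten as $\mu_\ti(z) = \phi(z)^\top V_\ti^{-1}\Phi_\ti^\top y_{1:\ti}$, and that the posterior variance in \cref{eq:posterior_variance} satisfies $\sigma_\ti^2(z) = \lambda\,\phi(z)^\top V_\ti^{-1}\phi(z)$, i.e.\ $\|V_\ti^{-1/2}\phi(z)\| = \sigma_\ti(z)/\sqrt{\lambda}$.

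Next I would substitute $y_{1:\ti} = \Phi_\ti f + \epsilon_{1:\ti}$, where $\epsilon$ stacks the $\sigma$-sub-Gaussian noise, and use $V_\ti^{-1}\Phi_\ti^\top\Phi_\ti = I - \lambda V_\ti^{-1}$ to split the error cleanly into a bias and a noise contribution:
\begin{equation}
f(z) - \mu_\ti(z) = \lambda\,\phi(z)^\top V_\ti^{-1} f \;-\; \phi(z)^\top V_\ti^{-1}\Phi_\ti^\top \epsilon_{1:\ti}.
\end{equation}
The bias term I bound by Cauchy--Schwarz in the $V_\ti^{-1}$ inner product: since $V_\ti \succeq \lambda I$ gives $\|V_\ti^{-1/2}f\| \leq \|f\|_{\mathcal{K}}/\sqrt{\lambda} \leq B_f/\sqrt{\lambda}$, and $\|V_\ti^{-1/2}\phi(z)\| = \sigma_\ti(z)/\sqrt{\lambda}$, the first term is at most $B_f\,\sigma_\ti(z)$. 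Crucially, this decomposition is deterministic and holds for every $z$; the only randomness enters through the $z$-independent vector $\Phi_\ti^\top\epsilon_{1:\ti}$, so once that is controlled uniformly in $\ti$ the claim follows simultaneously over all $z$.

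The main obstacle is the noise term, which Cauchy--Schwarz reduces to $(\sigma_\ti(z)/\sqrt{\lambda})\,\|\Phi_\ti^\top\epsilon_{1:\ti}\|_{V_\ti^{-1}}$, so the work is to bound the self-normalized quantity $\|\Phi_\ti^\top\epsilon_{1:\ti}\|_{V_\ti^{-1}}$ uniformly over all $\ti$. This is the technical heart of the argument: I would form the exponential supermartingale built from the sub-Gaussian increments, integrate it against a Gaussian mixing measure over directions in $\mathcal{K}$ (the method of mixtures), and invoke a maximal inequality to obtain a time-uniform tail bound. This yields, with probability at least $1-\delta$ and simultaneously for all $\ti$,
\begin{equation}
\|\Phi_\ti^\top\epsilon_{1:\ti}\|_{V_\ti^{-1}} \leq \sigma\sqrt{2\ln(1/\delta) + \ln\det\!\big(I + \lambda^{-1}K_\ti\big)}.
\end{equation}
Finally I would identify the log-determinant with the information gain via $\tfrac12\ln\det(I+\lambda^{-1}K_\ti) = I(f_Z;y_Z) \leq \gamma_\ti$, so that $\ln\det(I+\lambda^{-1}K_\ti) \leq 2\gamma_\ti$. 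Combining the bias and noise bounds through the triangle inequality and factoring out $\sigma_\ti(z)$ then gives $|f(z)-\mu_\ti(z)| \leq \big(B_f + (\sigma/\sqrt{\lambda})\sqrt{2\ln(1/\delta)+2\gamma_\ti}\big)\sigma_\ti(z)$, which is exactly the stated form up to the precise $\lambda$-normalization fixed by the conventions of \citet{durand2018streaming}.
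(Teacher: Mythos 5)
Your proposal is correct and coincides with the proof of the cited result: the paper does not prove this lemma itself but imports it verbatim from \citet[Theorem~1]{durand2018streaming}, and the argument there is precisely the self-normalized, method-of-mixtures bound you reconstruct --- the deterministic bias/noise decomposition via $V_\ti = \Phi_\ti^\top \Phi_\ti + \lambda I$, Cauchy--Schwarz against $\|V_\ti^{-1/2}\phi(z)\| = \sigma_\ti(z)/\sqrt{\lambda}$, the time-uniform bound on $\|\Phi_\ti^\top \epsilon\|_{V_\ti^{-1}}$, and the identification $\tfrac12 \ln\det(I+\lambda^{-1}K_\ti) \leq \gamma_\ti$. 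A useful byproduct of your derivation: with the paper's normalization $\sigma_\ti^2(z) = k_\ti(z,z)$ the correct prefactor on the noise term is $\sigma/\sqrt{\lambda}$, matching Durand et al., so the $\sigma/\lambda$ in the paper's statement is a typo (and, since the paper later sets $\lambda = \Hi\nstate \geq 1$, the misprinted constant is strictly smaller than what the argument actually yields).
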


Thus we know that, using $\beta_{\ti} = \left(B_f + \frac{\sigma}{\lambda}\sqrt{2 \ln(1 / \delta) + 2 \gamma_{\ti}}\right)$, \Cref{as:dynamics_f_lipschitz} holds for a single dimension. The extension to multiple dimensions is straightforward and has been done by \citep[Lemma 10]{chowdhury2019online} and \citep[Lemma 11]{curi2020efficient}, using $\lambda \gets \Hi \nstate$ and $t \gets \ti \Hi \nstate$.




Putting together results of the previous sections, we know by \Cref{lemma:concentration} that, under \Cref{as:dynamics_f_lipschitz} and $\|f\|_\mathcal{K}$, GP models satisfy \Cref{as:well_calibrated_model}. 
Furthermore, by Lemma 13 in Appendix G of \citet{curi2020efficient}, we now that such models also satisfy \Cref{as:model_predictions_lipschitz}. 
The remaining condition is that the results in previous sections assume that the domain is bounded. 
However, under \Cref{as:dynamics_f_lipschitz} this is not true. 
Fortunately, \citet{curi2020efficient} prove in Appendix I that the domain is bounded with high-probability. Furthermore, they prove that the MIG of kernels only increases poly-logarithmically, which does not affect the regret bounds in this paper.





\section{Extended Experimental Results} \label{sec:extra_experiments}

In this section, we detail the experimental procedures for completeness. 
We first detail how we learn the dynamical model using an ensemble of neural networks in \Cref{ssec:model_learning} as it is common to all experiments. We then detail the inverted pendulum experiment from \Cref{sec:introduction} in \Cref{ssec:inverted_pendulum}. 
We describe the adversarial-robust experiment in \Cref{ssec:adversarial}, the action-robust experiment in \Cref{ssec:action}, and the parameter-robust experiment in \Cref{ssec:parameter}. 
All experiments where run with 18-core Intel Xeon E5-2697v4 processors. 

\subsection{Model Learning and Calibration} \label{ssec:model_learning}
To learn the dynamics, we use a probabilistic ensemble with five heads as in \citet{chua2018deep}.
The model predicts the change in state, i.e, $\delta_{\hi} = \x_{\hi+1} - \x_{\hi}$ and we normalize the states, actions and change in next-states, with the running mean and standard deviation, similar in nature to \citet{van2016learning}. 
After each episode, we split the data into a train and validation set with a 0.9/0.1 ratio. 
For each ensemble member, we also sample a weight to simulate bootstrapping as in \citet{osband2016deep}. 
Finally, each model is trained minimizing the negative log-likelihood of a Gaussian distribution.
We train for 20 epochs and early stop if the prediction mean-squared-error when the epoch-loss on a validation set is $10 \%$ larger than the minimum epoch-loss in the same validation set. 
After training, we recalibrate on the validation set using temperature scaling. In particular, we use binary search to find the best parameter in the interval $[0.01, 100]$ that minimizes the expected calibration error \citep{malik2019calibrated}. 

\subsection{Inverted Pendulum Swing-Up Task} \label{ssec:inverted_pendulum}
The pendulum swing up task has a reward function given by $r(\x, \u) = -(\theta^2 + 0.1 * \dot{\theta}^2)$, where $\theta$ is the angle and $\dot{\theta}$ is the angular velocity. The Pendulum always starts from $\theta=\pi$ in the bottom down position and the goal is to swing the pendulum to the top-up position at $\theta=0$. Crucially, the initial distribution is a dirac-distribution located at $\theta=\pi$, i.e., it does not have enough coverage for algorithms to explore with it.

\paragraph{Adversarial-Robust.} In this setting, the adversary can change the relative gravity and the relative mass of the environment at every episode between $[1-\alpha, 1+\alpha]$, for varying $\alpha$. We train \rhucrl for 200 episodes, \hucrl with the nominal gravity and mass for 200 episodes, and the baseline in this setting is \texttt{RARL}, which we train for 1000 episodes. To evaluate the robust performance, we train \texttt{SAC} for 200 episodes, fixing the agent policy of the algorithms.

\paragraph{Action-Robust.} In this setting, the action is a mixture sampled with probability $\alpha$ of the learner and the adversary, i.e., the adversary only affects the input torque to the pendulum. The training and evaluation procedure is the same as in the adversarial robust setting. The baseline in this setting is \texttt{AR-DDPG}, which we train for 200 episodes.

\paragraph{Parameter-Robust.} In this setting, we consider robustness to mass change. Compared to the adversarial-robust setting, here the adversary is only allowed to change the mass once per episode. In this setting, \hucrl is trained for 200 episodes with the nominal mass, and then it is evaluated for varying masses.
\rhucrl and the baseline, \texttt{EP-Opt} are allowed to change the mass also during training. 
In this setting, there is no worst-case adversary during evaluation.

\subsection{Adversarial-Robust RL} \label{ssec:adversarial}

Next we detail the environments, the training and evaluation procedure, and the hyper-parameters in different paragraphs. 

\paragraph{Environments.}
For the Half-Cheetah environment, the adversary acts on the torso, the front foot and the back foot. For the Hopper environment, the adversary acts on the torso. For the Inverted Pendulum, the adversary acts on the pole. The Inverted Pendulum task is different here as it starts from a perturbation of the top-up position and the task is to stabilize the pendulum. 
For the Reacher2d environment, the adversary acts on the body0 link. For the Swimmer, the adversary acts on the torso. For the Walker, the adversary acts on the torso. For all environments, we use the adversarial input magnitude $\Uadv = [-10, 10]^{\ninpadv}$, where  $\ninpadv$ is environment dependent. 

\paragraph{Training and Evaluation.}
We train \rhucrl, \texttt{BestResponse}, \texttt{MaxiMin-MB}, \texttt{MaxiMin-MF} with in an adversarial environment for 200 episodes. We train \texttt{RARL} and \texttt{RAP} for 1000 episodes in an adversarial environment. Finally, we train \hucrl for 200 episodes in a standard environment.To evaluate the robust performance of each algorithm, we freeze the output policy of the training step and train an adversary using \texttt{SAC} for 200 episodes. 
We perform five independent runs and report the mean and standard deviation over the runs. 

\paragraph{Algorithm Hyper-Parameters.}
For \rhucrl and its variants, we fix $\beta=1.0$, we train every time step and do two gradient steps with Adam \citep{kingma2014adam} with learning rate $=3\times10^{-4}$. To compute a policy gradient, we take pathwise derivatives of a learned critic using the learned model for 3 time steps and weight each estimates using td-$\lambda$, with $\lambda=0.1$ \citep{sutton2018reinforcement}. 
We also add entropy regularization with parameter $0.2$. 
We did not do hyper parameter search, but rather use the software default values. 
For \texttt{RARL} and \texttt{RAP}, we use the \texttt{PPO} algorithm from \citet{schulman2017proximal} as this performed better than \texttt{TRPO} from \citet{schulman2015trust}.  
We train \texttt{PPO} after collecting a batch of 4 episodes, for 80 gradient steps, using early stopping once the KL divergence between the initial and the current policy is more than $0.0075$. 

\subsection{Action-Robust RL} \label{ssec:action}

We use the noisy robust setting from \citet{tessler2019action} with mixture parameter $\alpha=0.3$. 
The training and evaluation procedures, as well as the hyperparameters, are identical to the adversarial-robust experiment. 
The baseline is \texttt{AR-DDPG} that \citet{tessler2019action} propose.

\subsection{Parameter-Robust} \label{ssec:parameter}
In this setting, we are robust to a relative mass change, i.e., when the relative mass equals one, then the environment has the nominal mass.
For all environments except for the Swimmer, the relative mass is bounded in the interval $[0.001, 2]$. For the Swimmer, there were numerical errors due to ill-conditioned mass in the simulator, so we limit the range to $[0.5, 1.5]$. 

We train \rhucrl, \texttt{BestResponse}, \texttt{MaxiMin-MB}, \texttt{MaxiMin-MF} with in an adversary that is allowed to select the worst-case mass from within the range for 200 episodes. 
We train the baselines, \texttt{DomainRandomization} and \texttt{EPOpt} for 1000 episodes as they were also based on \texttt{PPO}. Finally, we train \hucrl for 200 episodes in a standard environment.
To evaluate the performance of each algorithm, we evaluate the different relative masses in the interval and do five independent runs. We report the mean and standard deviation over the runs.

\end{document}